
\documentclass[pdflatex,sn-nature]{sn-jnl}
\usepackage{url}


\usepackage{graphicx}%
\usepackage{multirow}%
\usepackage{amsmath,amssymb,amsfonts}%
\usepackage{amsthm}%
\usepackage{mathrsfs}%
\usepackage[title]{appendix}%
\usepackage{xcolor}%
\usepackage{textcomp}%
\usepackage{manyfoot}%
\usepackage{booktabs}%
\usepackage{algorithm}%
\usepackage{algorithmicx}%
\usepackage{algpseudocode}%
\usepackage{listings}%
\usepackage{hyperref}
\usepackage{url}
\usepackage{nicefrac}       
\usepackage{microtype}      
\usepackage[dvipsnames]{xcolor}       
\usepackage{colortbl} 
\usepackage{subcaption}
\usepackage{wrapfig}


\usepackage{siunitx}
\sisetup{per-mode=symbol}
\sisetup{range-phrase=--, range-units=single}
\sisetup{group-minimum-digits=4, group-separator={,}}
\sisetup{separate-uncertainty=true}
\DeclareSIUnit{\cells}{cells}

\usepackage{import}

\captionsetup{font=footnotesize}

\usepackage{listings}
\usepackage{xcolor}

\definecolor{codegreen}{rgb}{0,0.6,0}
\definecolor{codegray}{rgb}{0.5,0.5,0.5}
\definecolor{codepurple}{rgb}{0.58,0,0.82}
\definecolor{backcolour}{rgb}{0.95,0.95,0.92}

\lstdefinestyle{mystyle}{
    backgroundcolor=\color{backcolour},   
    commentstyle=\color{codegreen},
    keywordstyle=\color{magenta},
    numberstyle=\tiny\color{codegray},
    stringstyle=\color{codepurple},
    basicstyle=\ttfamily\footnotesize,
    breakatwhitespace=false,         
    breaklines=true,                 
    captionpos=b,                    
    keepspaces=true,                 
    numbers=left,                    
    numbersep=5pt,                  
    showspaces=false,                
    showstringspaces=false,
    showtabs=false,                  
    tabsize=2
}

\lstset{style=mystyle}

\usepackage{comment}

\usepackage{todonotes}

\usepackage[superscript,biblabel]{cite}


\theoremstyle{thmstyleone}%
\newtheorem{theorem}{Theorem}
\newtheorem{lemma}[theorem]{Lemma}

\theoremstyle{thmstyletwo}%

\theoremstyle{thmstylethree}%
\newtheorem{corollary}{Corollary}

\newtheorem{assumption}{Assumption}

\raggedbottom

\begin{document}

\title[FedMAP]{FedMAP: Personalised Federated Learning for Real Large-Scale Healthcare Systems}


\author[1]{\fnm{Fan} \sur{Zhang}} 
\author[1]{\fnm{Daniel} \sur{Kreuter}} 
\author[12]{\fnm{Carlos} \sur{Esteve-Yagüe}} 
\author[1]{\fnm{Sören} \sur{Dittmer}} 
\author[10]{\fnm{Javier} \sur{Fernandez-Marques}}
\author[3,5,13,14]{\fnm{Samantha} \sur{Ip}}
\author[2]{\fnm{BloodCounts!} \sur{Consortium}}
\author[21]{\fnm{Norbert C.J.} \sur{de Wit}}
\author[3,4,5,6,13,15,16]{\fnm{Angela} \sur{Wood}}
\author[9]{\fnm{James HF Rudd}} 
\author[10,11]{\fnm{Nicholas} \sur{Lane}}
\author[3,7,8]{\fnm{Nicholas} \spfx{S} \sur{Gleadall}}
\author[1]{\fnm{Carola-Bibiane} \sur{Schönlieb}} 
\author*[1,9]{\fnm{Michael} \sur{Roberts}} \email{mr808@cam.ac.uk}

\affil[1]{\orgdiv{Department of Applied Mathematics and Theoretical Physics}, 
           \orgname{University of Cambridge}, 
           \orgaddress{\city{Cambridge}, \country{UK}}}

\affil[2]{\orgdiv{A list of authors and their affiliations appears at the end of the paper}}

\affil[3]{\orgdiv{Victor Phillip Dahdaleh Heart and Lung Research Institute}, 
           \orgname{University of Cambridge}, 
           \orgaddress{\street{Cambridge Biomedical Campus}, \city{Cambridge}, \country{UK}}}

\affil[4]{\orgdiv{British Heart Foundation Data Science Centre}, 
           \orgname{Health Data Research UK}, 
           \orgaddress{\city{London}, \country{UK}}}

\affil[5]{\orgdiv{Department of Public Health and Primary Care, British Heart Foundation Cardiovascular Epidemiology Unit}, 
           \orgname{University of Cambridge}, 
           \orgaddress{\city{Cambridge}, \country{UK}}}

\affil[6]{\orgdiv{Health Data Research UK Cambridge}, 
           \orgaddress{\street{Wellcome Genome Campus and University of Cambridge}, 
                        \city{Cambridge}, \country{UK}}}

\affil[7]{\orgdiv{Department of Haematology}, 
           \orgname{University of Cambridge}, 
           \orgaddress{\street{Cambridge Biomedical Campus}, \city{Cambridge}, \country{UK}}}

\affil[8]{\orgname{NHS Blood and Transplant}, 
           \orgaddress{\street{Cambridge Biomedical Campus}, \city{Cambridge}, \country{UK}}}

\affil[9]{\orgdiv{Department of Medicine}, 
           \orgname{University of Cambridge}, 
           \orgaddress{\street{Cambridge Biomedical Campus}, \city{Cambridge}, \country{UK}}}

\affil[10]{\orgname{Flower Labs}, 
            \orgaddress{\city{Cambridge}, \country{UK}}}

\affil[11]{\orgdiv{Department of Computer Science and Technology}, 
            \orgname{University of Cambridge}, 
            \orgaddress{\city{Cambridge}, \country{UK}}}

\affil[12]{\orgdiv{Department of Mathematics}, 
            \orgname{University of Alicante}, 
            \orgaddress{\city{Alicante}, \country{Spain}}}

\affil[13]{\orgdiv{The Cambridge Centre for AI in Medicine, Department of Applied Mathematics and Theoretical Physics}, 
            \orgname{University of Cambridge}, 
            \orgaddress{\city{Cambridge}, \country{UK}}}

\affil[14]{\orgdiv{Centre for Cancer Genetic Epidemiology, Department of Public Health and Primary Care}, 
            \orgname{University of Cambridge}, 
            \orgaddress{\city{Cambridge}, \country{UK}}}

\affil[15]{\orgdiv{British Heart Foundation Centre of Research Excellence}, 
            \orgname{University of Cambridge}, 
            \orgaddress{\city{Cambridge}, \country{UK}}}

\affil[16]{\orgdiv{National Institute for Health and Care Research Blood and Transplant Research Unit in Donor Health and Behaviour}, 
            \orgname{University of Cambridge}, 
            \orgaddress{\city{Cambridge}, \country{UK}}}

\abstract{
Federated learning (FL) promises to enable collaborative machine learning across healthcare sites whilst preserving data privacy. Practical deployment remains limited by statistical heterogeneity arising from differences in patient demographics, treatments, and outcomes, and infrastructure constraints. We introduce FedMAP, a personalised FL (PFL) framework that addresses heterogeneity through local Maximum a Posteriori (MAP) estimation with Input Convex Neural Network priors. These priors represent global knowledge gathered from other sites that guides the model while adapting to local data, and we provide a formal proof of convergence. Unlike many PFL methods that rely on fixed regularisation, FedMAP's prior adaptively learns patterns that capture complex inter-site relationships. We demonstrate improved performance compared to local training, FedAvg, and several PFL methods across three large-scale clinical datasets: 10-year cardiovascular risk prediction (CPRD, \num{387} general practitioner practices, \num{258688} patients), iron deficiency detection (INTERVAL, \num{4} donor centres, \num{31949} blood donors), and mortality prediction (eICU, \num{150} hospitals, \num{44842} patients). FedMAP incorporates a three-tier design that enables participation across healthcare sites with varying infrastructure and technical capabilities, from full federated training to inference-only deployment. Geographical analysis reveals substantial equity improvements, with underperforming regions achieving up to \SI{14.3}{\percent} performance gains. This framework provides the first practical pathway for large-scale healthcare FL deployment, which ensures clinical sites at all scales can benefit, equity is enhanced, and privacy is retained.
}

\keywords{Federated Learning, Maximum A Posteriori Estimation, personalised Federated Learning, Non-IID data, Bi-Level optimisation, Survival Modelling, Public Health}

\maketitle

\section*{Introduction}\label{sec:intro}
Federated learning (FL) is a decentralised artificial intelligence (AI) paradigm that enables multiple sites to collaboratively train a shared model whilst maintaining data privacy by keeping datasets locally stored and only exchanging model parameters or gradients rather than raw data. In healthcare settings, leveraging data from multiple hospitals and clinics, FL could enable machine learning (ML) models to capture variation in patient demographics, comorbidities and treatment patterns, with the potential to provide more robust and generalisable insights for diagnosis and prognosis~\cite{dayan_federated_2021}. The appeal of FL lies in its potential to address two fundamental challenges for medical AI: First, clinical data are highly regulated: raw electronic health records and laboratory results cannot be pooled centrally due to stringent privacy legislation, security risks, and the prohibitive cost of transferring large volumes of sensitive information~\cite{gdpr2016}. Second, models trained in one healthcare setting often fail to generalise to others, as hospitals serve very different populations, including outpatient, trauma, paediatric, and oncology populations, and they differ in demographics, clinical pathways, and the prevalence of outcomes. These constraints are particularly acute in medicine, where legal and ethical obligations mandate that patient-level data remain within site boundaries, often managed through Trusted Research Environments (TREs)~\cite{bradford_international_2020}. Our recent systematic review of FL methods in healthcare identified that despite its promise, FL remains predominantly confined to proof of concept studies \cite{zhang2023recent}, with only \SI{5.2}{\percent} of reported implementations demonstrating real world clinical use~\cite{teo_federated_2024}. This limited deployment reflects significant practical challenges: methodological constraints in adapting FL to medical data, and substantial disparities in computational resources and technical expertise across healthcare sites~\cite{rieke_future_2020}. These challenges can be grouped into three categories: (i) statistical heterogeneity in clinical data, (ii) methodological limitations of the current FL 
approaches and (iii) infrastructure disparities across healthcare sites. We describe each of these in the following sections.\\

\noindent
First, real world clinical data are commonly non-independent and identically distributed (non-IID) across sites due to hospitals serving different patient populations, using different diagnostic tools, and following different clinical protocols, which introduce label distribution skew, feature distribution skew and quantity skew simultaneously~\cite{tongDistributedLearningHeterogeneous2022}.
Early FL work demonstrated feasibility in radiology~\cite{sheller_federated_2020}, genomics~\cite{kolobkov_efficacy_2024} and electronic health records~\cite{dayan_federated_2021}, but mostly used Federated Averaging (FedAvg)~\cite{mcmahan_communication-efficient_2016} which assumes IID data. This mismatch between algorithmic assumptions and healthcare data reality results in reduced generalisability, slower convergence, lower accuracy and higher communication costs, collectively failing to adequately serve individual sites' needs~\cite{li_convergence_2020}.\\

\noindent
Second, popular FL methods such as FedAvg~\cite{mcmahan_communication-efficient_2016}, FedProx~\cite{li2020federated}, and SCAFFOLD~\cite{karimireddy2020scaffold} are based on maximum likelihood estimation (MLE) for the approximation of optimal model parameters. These methods combine models trained at individual sites to produce a single global model. However, this fails to achieve optimal local performance for non-IID data, as a single global objective function cannot adequately capture the heterogeneity in data across different sites~\cite{li_fedbn_2021}. Local gradients from different sites often diverge significantly, resulting in slow or suboptimal convergence~\cite{li_convergence_2020}. Existing FedAvg variants attempt to address these limitations through various mechanisms: FedProx~\cite{li2020federated} employs fixed quadratic regularisation to constrain deviation from the global model, yet lacks the adaptivity required for complex inter-site relationships, whilst q-FedAvg~\cite{li_fair_2020} reweights site contributions based on loss functions but remains vulnerable to overweighting noisy or highly skewed clinical datasets~\cite{tang_virtual_2022}.\\

\noindent
Apart from classic FedAvg based methods, recent research highlights that personalised FL (PFL), where each site obtains a model tailored to their specific data distribution, can mitigate the impact of data heterogeneity \cite{9210355}. The field spans diverse methodological approaches from meta-learning~\cite{NEURIPS2020_243892103559bfe} and client clustering~\cite{REN2023119499} to Bayesian methods~\cite{cao_bayesian_2023}. 
Among these approaches, PFL methods adding regularisation terms to control deviation between local and global models, remain especially appealing in healthcare context, as they allow for the model architecture and task flexibility needed for healthcare ML applications. Several popular methods, such as FedDyn~\cite{acar_federated_2021}, Ditto~\cite{pmlr-v139-li21h}, and pFedME~\cite{t2020personalized}, share these advantages but rely on fixed regulariser functions, such as proximal terms or gradient alignment, which limit their flexibility to adapt to varying parameter distributions across sites. Similarly, Bayesian approaches such as pFedBayes~\cite{pmlr-v162-zhang22o} and pFedGP~\cite{achituve2021personalized} face constraints in their underlying regularisation structures due to rigid prior assumptions that limit regularisation flexibility.\\

\noindent
Finally, existing FL implementations are not feasible for many hospitals, since infrastructure and technical expertise requirements are high. Beyond access to high-performance servers and GPUs \cite{kaissis_secure_2020}, hospitals must also maintain secure infrastructure, implement robust data governance processes, and ensure appropriate anonymisation and informatics support. These requirements create substantial barriers for sites with limited technical staff or resources. Academic medical centres and research hospitals may have high-performance computing clusters, standardised processes, and dedicated data science and bioinformatics research teams that can implement and maintain FL systems. By contrast, many regional hospitals operate with more moderate computational resources and established electronic health record systems, but often have limited ML expertise and technical support staff. These constraints make it difficult for them to participate in complex FL networks that require substantial local model training, infrastructure setup, and ongoing maintenance~\cite{sheller_federated_2020}. Community hospitals, which typically handle initial patient assessments and referrals rather than complete care pathways, face the most significant resource constraints for implementing sophisticated FL infrastructure~\cite{hsu_clinical_2021}.\\

\noindent
To address these limitations, we introduce FedMAP, a new PFL framework that is derived from first principles of Bayesian inference to address the FL challenges of non-IID data by utilising a global prior distribution derived from MAP. Building on the MAP estimation framework, we formulate the FL problem as a bi-level optimisation~\cite{colson_overview_2007} that adaptively learns and updates a global ``prior'' model, parameterised using Input Convex Neural Networks (ICNNs)~\cite{pmlr-v70-amos17b}, regularising the local model optimisation. By using ICNNs to parameterise the global regularisation function, we obtain a convex MAP objective, which in turn allows us to prove existence, uniqueness, and convergence in our bi-level optimisation framework. Additionally, unlike existing regularisation based PFL methods such as FedDyn, Ditto, and pFedME that use fixed penalties, ICNNs allow for more effective adaptation to heterogeneous site distributions while maintaining convergence guarantees. This approach effectively balances knowledge sharing and local data distribution adaptation. Moreover, the learnable ICNN-based regularisation component of FedMAP can be decoupled and reused independently of the federated training process, which enables deployment to be adapted for sites with different computational resources. 
In particular, sites with extensive ML infrastructure, dedicated IT support staff, and expertise to maintain complex FL systems can participate in full federated training. Sites with adequate local computing resources and basic technical support but limited networking capabilities or specialised ML expertise can perform local fine-tuning using pre-trained models. Sites with minimal computational infrastructure or technical support are accommodated through inference-only deployment. We refer to these as Tier 1 (T1), Tier 2 (T2), and Tier 3 (T3) respectively (Figure~\ref{fig:fedmap}). T1 sites can take part in FL training using FedMAP; the combination of learned global prior and global model parameters enables T2 sites to perform FedMAP local optimisation on local data without additional communication and the global model enables T3 sites to perform inference on local data without training. This tiered approach directly addresses one of the key deployment challenges identified in our systematic review~\cite{zhang2023recent}, which concluded that the ``lack of widespread computational capability remains a barrier to the mass adoption of FL in real-world healthcare settings''.\\

\noindent
Our contributions include four advances in healthcare FL, with a focus on addressing real clinical data challenges and enabling practical deployment. First, we introduce FedMAP, a provably convergent PFL framework that uses MAP estimation with learnable ICNN priors. Second, we demonstrate FedMAP outperforms existing FL and PFL methods on three large-scale clinical datasets, including the first FL applications to CPRD~\cite{herrett_data_2015, xu_prediction_2021} for cardiovascular risk prediction and INTERVAL~\cite{angelantonio_efficiency_2017} for blood donor iron deficiency detection, showing computational scalability for healthcare federations exceeding 300 healthcare sites and \num{300000} patient records. Geographical analysis reveals that underperforming regions achieve up to 14.3\% performance gains, which show FedMAP's potential to reduce healthcare disparities. Third, we propose a flexible deployment strategy through a three-tier architecture accommodating diverse healthcare infrastructure capabilities. Fourth, we integrate FedMAP code with the popular Flower FL framework~\cite{beutel2020flower} for direct adoption by the community.



\begin{figure}[h]
\centering
\includegraphics[width=1\textwidth]{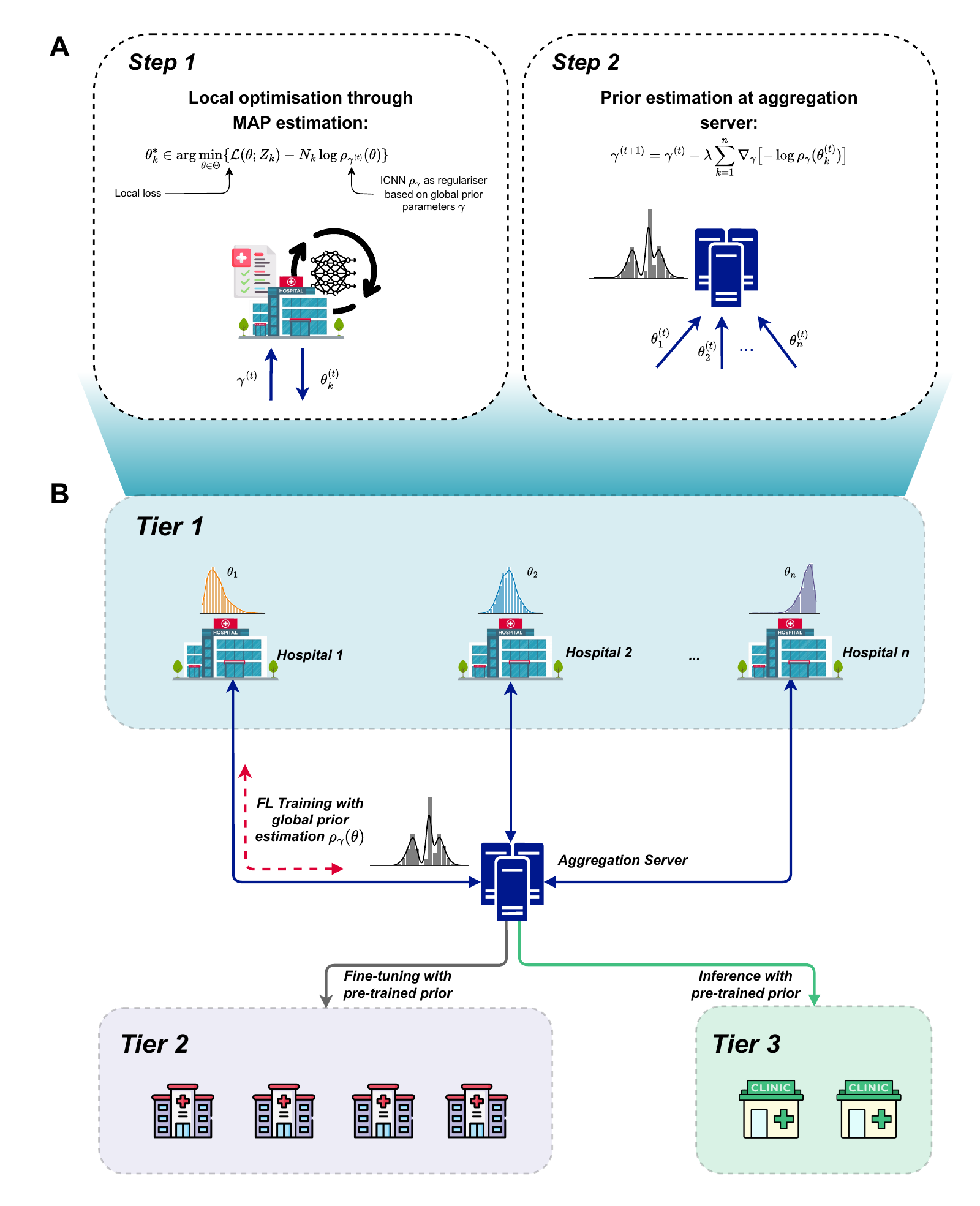}
\caption{
\textbf{FedMAP architecture diagram.}
\textbf{A,} Local training (step 1) and server prior estimation (step 2) of the proposed FedMAP algorithm for FL.
\textbf{B,} Diagram showcasing Tier 1 (FL), Tier 2 (local fine-tuning, for sites with limited IT networking resources), and Tier 3 (local inference-only use, for sites with limited IT networking resources and limited data availability) of FedMAP's multi-tier service design.
}
\label{fig:fedmap}
\end{figure}

\clearpage
\section*{Results}
We compared FedMAP against popular FL and PFL frameworks for both performance and equity. All FL methods were evaluated using three large-scale clinical datasets: CPRD~\cite{xu_prediction_2021} (258,688 patients from 387 practices), INTERVAL~\cite{angelantonio_efficiency_2017} (31,949 blood donors from four centres), and eICU~\cite{pollard_eicu_2018} (44,842 patients from 150 hospitals) (Table~\ref{tab:dataset_characteristics}). These datasets represent clinically relevant FL deployment scenarios spanning different institutional scales, from multi-centre trial configurations (INTERVAL: 4 sites) to enterprise-scale networks approaching realistic deployment conditions (CPRD: 387 sites). The clinical prediction tasks comprise 10-year cardiovascular disease (CVD) risk prediction (CPRD), iron deficiency detection (INTERVAL), and mortality prediction (eICU). These datasets exhibit substantial non-IID characteristics representative of real-world federated healthcare scenarios (see Materials \& Methods).\\

\noindent
For performance evaluation, we report concordance index (C-index) for CPRD, balanced accuracy for INTERVAL and AUROC for eICU, which follow established methodological frameworks from Xu et al~\cite{xu_prediction_2021}, Kreuter et al~\cite{kreuter_artificial_2025}, and Elhussein and Gürsoy~\cite{pmlr-v219-elhussein23a}, respectively. Statistical comparisons employed the Wilcoxon signed-rank test to assess site performance distributions across methods. Complete statistical analyses are presented in Tables~\ref{tab:comparison_cprd_fl}--\ref{tab:comparison_eICU_inf}. INTERVAL results were excluded from statistical testing due to insufficient sample sizes. We evaluated model performance across the three deployment tiers: Tier 1 (T1), Tier 2 (T2), and Tier 3 (T3). With three paired observations for T1 comparisons, the discrete nature of the Wilcoxon test statistic prevents achieving p $<0.05$ regardless of effect magnitude and T2 and T3 have only single sites precluding paired testing entirely.


\subsection*{Tier 1 performance: full FL}
Across all three datasets, FedMAP achieved the highest median performance amongst the evaluated methods  (Fig.~\ref{fig:performance}A). On the CPRD dataset (185 T1 sites), the standard FL baseline FedAvg achieved a median C-index of 0.667 (IQR: 0.060), whilst individualised PFL methods demonstrated variable improvements: pFedME reached 0.696 (IQR: 0.068), pFedBayes 0.692 (IQR: 0.066), pFedGP 0.691 (IQR: 0.058), and pFedFDA 0.502 (IQR: 0.137). Individual site training yielded a median of 0.690 (IQR: 0.072). FedMAP achieved a median C-index of 0.717 (IQR: 0.059), representing a \SI{3}{\percent} improvement over the next-best performing method, pFedME. FedMAP's performance differed significantly from pFedME ($p<0.001$, median of paired differences (MD) $=0.019$), FedAvg ($p<0.001$, MD $=0.050$), and individual training ($p<0.001$, MD $=0.024$).\\

\noindent
On the INTERVAL dataset (3 T1 sites), FedAvg achieved a median balanced accuracy of 0.760 (IQR: 0.013), with PFL methods showing mixed performance: pFedBayes reached 0.773 (IQR: 0.017), pFedGP 0.771 (IQR: 0.015), pFedFDA 0.768 (IQR: 0.010), and pFedME 0.737 (IQR: 0.025). Individual training achieved 0.779 (IQR: 0.025). FedMAP achieved the highest median balanced accuracy of 0.786 (IQR: 0.013). The eICU dataset (51 sites) showed similar patterns, with FedAvg achieving a median AUROC of 0.787 (IQR: 0.124), PFL methods ranging from pFedME's 0.772 (IQR: 0.107) to pFedGP's 0.732 (IQR: 0.173), and individual training at 0.701 (IQR: 0.139). FedMAP achieved a median AUROC of 0.808 (IQR: 0.116), significantly outperforming individual training ($p<0.001$, MD $=0.098$) and demonstrating improvement over FedAvg ($p=0.057$, MD $=0.019$).

\subsection*{Tier 2 performance: local fine-tuning}

FedMAP achieved the highest median performance across all three datasets in the fine-tuning scenario (Fig.~\ref{fig:performance}B).
On the CPRD dataset, FedMAP achieved a median  C-index of 0.708 (IQR: 0.089), compared to FedAvg's (0.663, IQR: 0.094; $p<0.001,\text{MD}=0.045$), representing a \SI{6.8}{\percent} improvement. Among PFL methods, pFedGP achieved 0.692 (IQR: 0.103), pFedBayes 0.681 (IQR: 0.091), whilst pFedME (0.663, IQR: 0.111) and pFedFDA (0.499, IQR: 0.131) showed lower performance. FedMAP significantly outperformed individual training (0.660, IQR: 0.098; $p<0.001,\text{MD}=0.056$).
For the single-site INTERVAL fine-tuning scenario (solely new blood donors), FedMAP achieved the highest balanced accuracy of 0.809, outperforming FedAvg (0.694). Among PFL methods, pFedME achieved 0.785, pFedGP 0.757, pFedBayes 0.739, and pFedFDA 0.708.
On the eICU dataset, FedMAP achieved the highest median AUROC of 0.778 (IQR: 0.133), outperforming FedAvg (0.704, IQR: 0.151; $p=0.001,\text{MD}=0.077$). Among PFL methods, pFedBayes achieved 0.753 (IQR: 0.198; $p=0.040,\text{MD}=0.038$), pFedGP and pFedME both 0.725, and pFedFDA 0.705. FedMAP also outperformed individual training (0.694, IQR: 0.171; $p=0.309,\text{MD}=0.049$).

\begin{figure}
    \centering
    \includegraphics[width=1\linewidth]{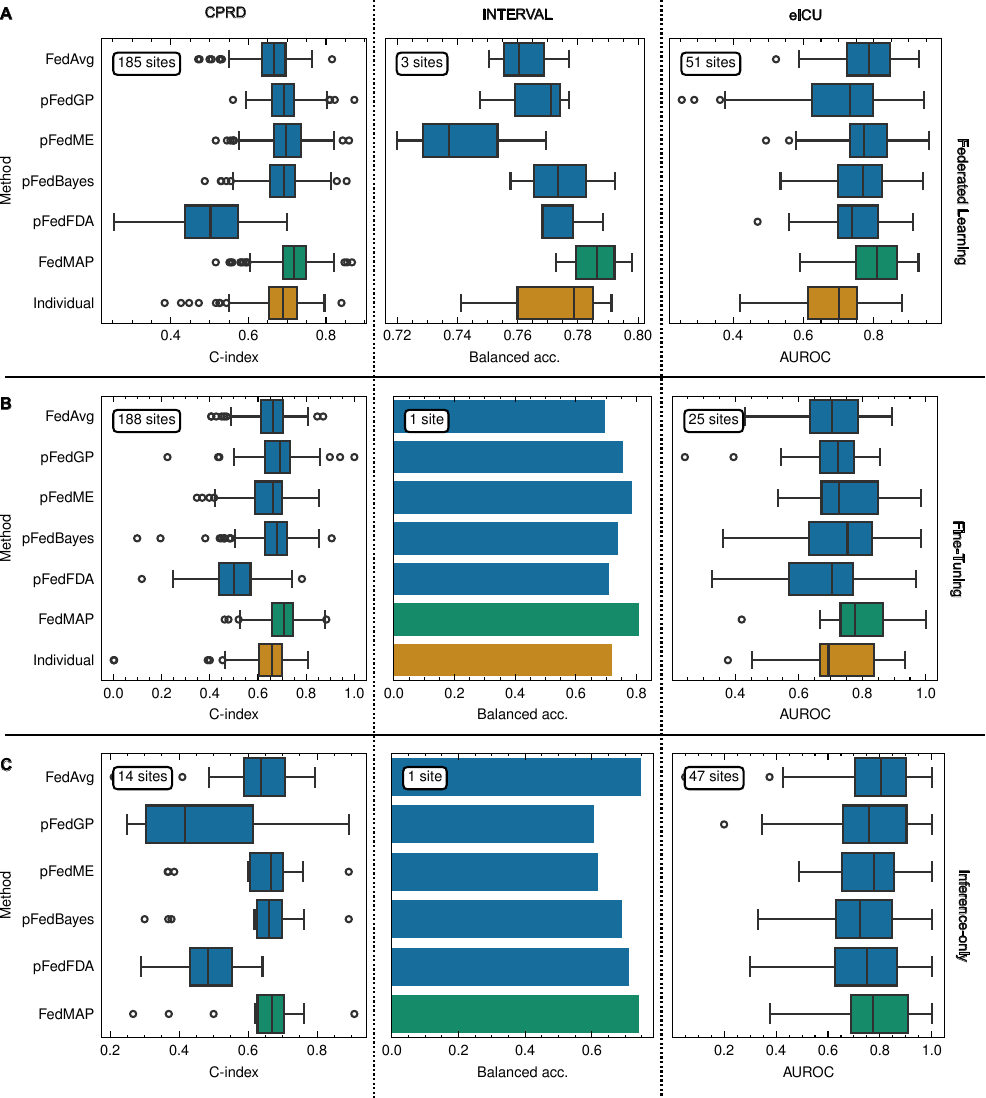}
    \caption{\textbf{Model performance across deployment tiers.}
    \textbf{a,} Tier 1 (T1): full federated training.
    \textbf{b,} Tier 2 (T2): local fine-tuning.
    \textbf{c,} Tier 3 (T3): inference-only.
    Performances are measured on local test sets. The number of sites in each group is indicated in the top left of the plots.
    Boxplot boxes indicate median and first-to-third quartile range of the data. Whiskers extend to 1.5 IQRs.
    IQR, interquartile range.
    }
    \label{fig:performance}
\end{figure}

\subsection*{Tier 3 performance: Inference-only deployment}
In the inference-only scenario, performance differences between methods were substantially reduced compared to T1 and T2 scenarios (Fig.~\ref{fig:performance}C). On CPRD, FedAvg achieved a median C-index of 0.636 (IQR: 0.118; $p = 0.863$ vs FedMAP, MD $= 0.048$). Among PFL methods, pFedME achieved 0.667 (IQR: 0.095; $p = 1$ vs FedMAP, MD $= 0.005$), pFedGP 0.416 (IQR: 0.226), and pFedBayes 0.660 (IQR: 0.166). FedMAP achieved a median C-index of 0.670 (IQR: 0.078). No statistically significant differences were observed between methods.\\

\noindent
On the INTERVAL dataset, FedAvg achieved the highest balanced accuracy of 0.748, followed by FedMAP at 0.743, pFedME at 0.618, and pFedGP at 0.607. On the eICU dataset, FedAvg achieved a median AUROC of 0.803 (IQR: 0.195), followed by pFedME at 0.778 (IQR: 0.201), FedMAP at 0.773 (IQR: 0.219; $p=1$ vs FedAvg, MD $=-0.008$), and pFedGP at 0.758 (IQR: 0.246). No statistically significant differences were observed between methods on available T3 sites.

\subsection*{Relationship between individual site performance and FL benefit}
We examined the relationship between individual training performance and change when using FL across healthcare sites. Linear regression analysis between individual training performance and relative performance gain using FedMAP revealed a strong negative correlation across all datasets: eICU (Spearman $r=-0.668$, $p<0.001$, $n=51$ sites) 
and CPRD (Spearman $r=-0.435$, $p<0.001$, $n=185$ sites). This indicates that sites with weaker individual performance gained substantially more from FL than high-performing sites.\\

\noindent
Relative performance changes ranged from \SI{-15.2}{\percent} to $+\SI{74.9}{\percent}$ across eICU sites. Sites achieving individual AUROC below 0.65 demonstrated median improvements of \SI{28.4}{\percent} (IQR: \SIrange{16.1}{47.9}{\percent}), whilst sites with individual AUROC above 0.65 showed median improvements of \SI{9.4}{\percent} (IQR: \SIrange{4.0}{16.4}{\percent}) (Fig.~\ref{fig:eicu_auroc_reg_fedmap}). Similar patterns were observed for the other datasets and FL methods (Extra Data Fig.~\ref{fig:supp_regression}).

\begin{figure}
    \centering
    \includegraphics[width=0.6\linewidth]{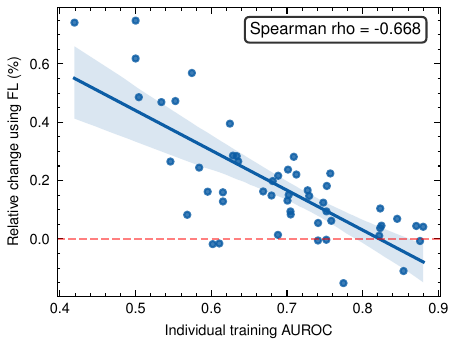}
    \caption{
    \textbf{FL performance gains versus individual site performance.}
    Each point represents an eICU site showing AUROC from individual training (x-axis) and relative change using FedMAP (y-axis). The solid line shows a linear regression fit with \SI{95}{\percent} confidence interval (shaded region). The dashed line indicates zero improvement.
    }
    \label{fig:eicu_auroc_reg_fedmap}
\end{figure}

\subsection*{Regional performance analysis}

\noindent
To assess whether FL can improve clinical decision support quality across regions with varying healthcare infrastructure, patient demographics, and baseline clinical outcomes,
we conducted an analysis of fine-tuning performance (T2) using the CPRD dataset partitioned by NHS England regions (Fig.~\ref{fig:england_map}).\\

\noindent
Individual training demonstrated substantial variation across regions.
London achieved the highest median C-index of 0.675, followed by South West at 0.666, whilst North East recorded the lowest at 0.601, representing a \SI{12.3}{\percent} performance range.\\

\noindent
FedMAP reduced regional performance variation to \SI{6.09}{\percent} (range: 0.682–0.723). North East of England improved from 0.601 to 0.687 median C-index (+\SI{14.3}{\percent}), Yorkshire and The Humber from 0.622 to 0.682 (+\SI{9.7}{\percent}), and North West from 0.662 to 0.692 (+\SI{4.5}{\percent}).
Alternative FL approaches exhibited different equity characteristics. pFedGP achieved performance ranging from 0.698 in London to lower values in other regions (\SI{8.6}{\percent} range). FedAvg demonstrated \SI{7.2}{\percent} regional variation. Both pFedME and pFedFDA showed either reduced absolute performance or larger regional disparities compared to FedMAP.



\begin{figure}[hbtp]
    \centering
    \makebox[\textwidth]{\includegraphics{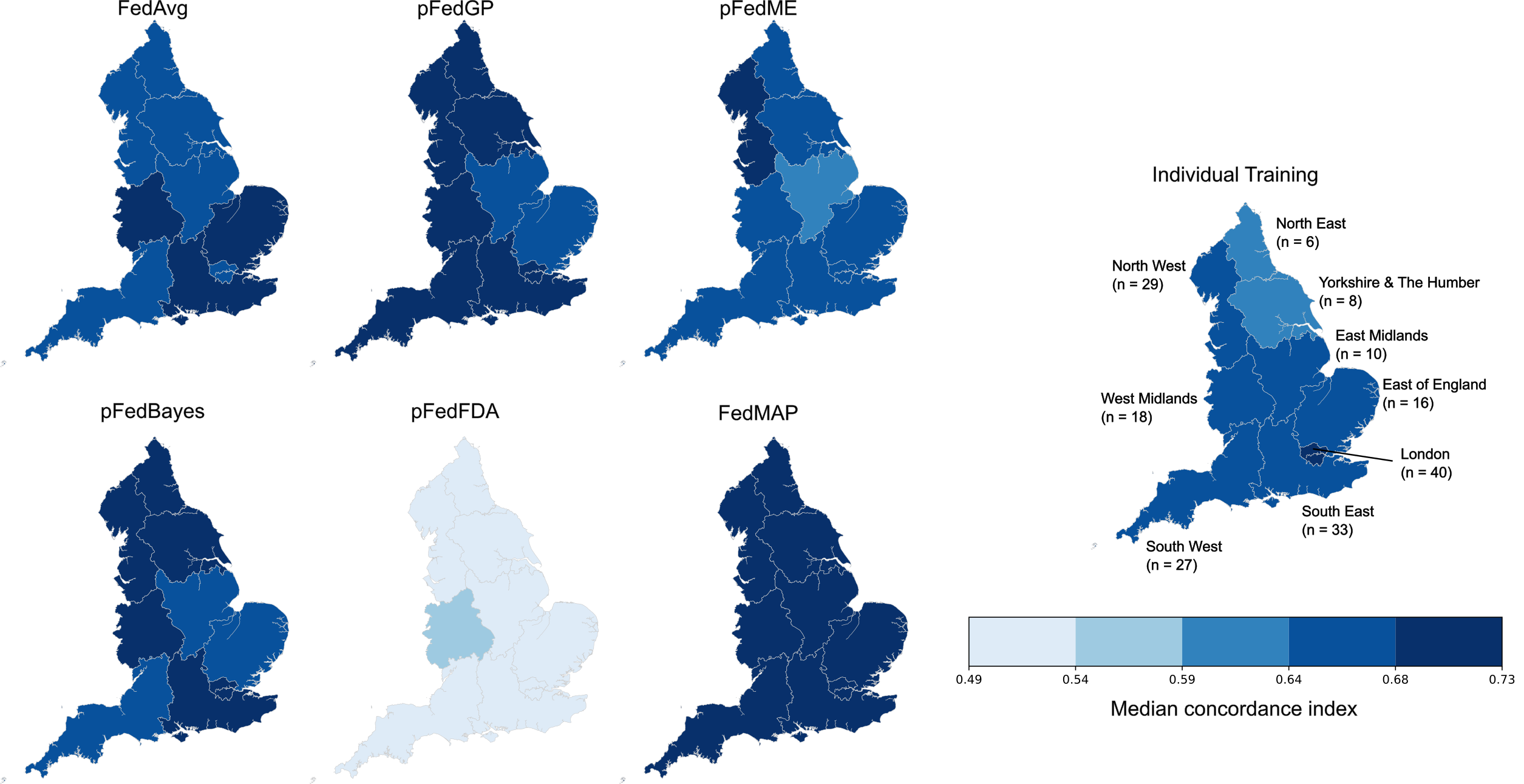}} 
    \caption{
    \textbf{10-year CVD prediction model performance collated by NHS England regions.}
    Colours indicate median C-index per region for the fine-tuning sites (Tier 2). 
    n, number of sites per region.
    }
    \label{fig:england_map}
\end{figure}


\clearpage
\section*{Discussion}\label{sec:discussion}
\noindent
We presented FedMAP, a personalised PFL framework that addresses the dual challenges of statistical heterogeneity and infrastructure disparities in healthcare systems. \\

\noindent
\textit{Theoretically}, we reframe PFL as a bi-level optimisation problem under a MAP framework with proven convergence guarantees. This moves beyond the simple parameter averaging of conventional methods. The core advance is the use of ICNNs to learn a data-driven global prior, which enables us to formally prove existence and convergence to a unique solution. This offers a theoretical foundation for personalisation that addresses combinations of data heterogeneity simultaneously in real world clinical data.\\

\noindent
\textit{Practically and clinically}, FedMAP achieved consistent performance gains across deployment scenarios. In full federated training (T1), FedMAP showed median improvements of 3\,\% (CPRD), 0.9\,\% (INTERVAL), and 2.7\,\% (eICU) over the next best method. These improvements helped reduce performance disparities: sites with weaker individual performance gained substantially more from federation, which narrowed the gap between high and low-performing sites. Linear regression revealed strong negative correlations between individual training performance and relative performance gain (eICU: $r = -0.668$, $p < 0.001$; CPRD: $r = -0.435$, $p < 0.001$), indicating FL preferentially improves outcomes where local data are most limited.\\

\noindent
Regional analysis demonstrated FedMAP's potential to address documented cardiovascular care inequalities. England shows significant regional disparities in cardiovascular disease outcomes, with a 51\,\% difference between the highest and lowest regional mortality rates, where northern regions experience significantly worse outcomes~\cite{cardiovascular_briefing_gajjar}. Individual hospital model training demonstrated performance patterns reflecting these real-world healthcare infrastructure inequalities. North East England and Yorkshire \& The Humber, with the highest cardiovascular mortality rates nationally,~\cite{gov_cardiovascular_2024} showed the largest performance gains under FedMAP (14.3\,\% and 9.7\,\%). This suggests FL might reduce outcome disparities by enabling resource-constrained regions to benefit from knowledge aggregated across better-resourced sites. FedMAP's fine-tuning approach enables resource-constrained hospitals to achieve prediction performance comparable to major specialist centres, potentially addressing documented regional disparities in cardiovascular care quality~\cite{stoye_distribution_2022}. This capability could support NHS England's cardiovascular disease strategy~\cite{UKGovernment2025} , while accommodating the financial constraints facing regional Integrated Care Systems~\cite{bliss2024state}.\\


\noindent
\textit{Operationally}, FedMAP presents a realistic view of healthcare infrastructure setup through its three tier design. Most of the research on FL neglects the operational fact that healthcare organisations have significantly varied computational resources~\cite{rieke_future_2020, kaissis_secure_2020}. Clinics serving underrepresented populations are often the least equipped to participate in advanced AI development~\cite{nong_urgency_2025}. By providing distinct pathways from full federated training for academic medical centres (T1) to using the pre-trained global model for resource constrained sites (T3), FedMAP offers a scalable and inclusive ecosystem. This service oriented model is a critical step towards the equitable clinical translation of FL. This tiered deployment model creates a system where well-resourced sites contribute training data and computational resources and smaller sites benefit from improved predictive models without requiring local ML infrastructure or expertise, which addresses the inequity where hospitals serving vulnerable populations have the least access to AI-driven clinical decision support~\cite{nong_urgency_2025}. Looking beyond high-income healthcare systems, our approach is particularly relevant for low- and middle-income countries which face critical shortages of specialist physicians and diagnostic infrastructure~\cite{agyeman-manu_prioritising_2023}. FL allows these regions access to the global medical expertise as well as keeping their data private and secure.\\

\noindent
Looking forward, this study has limitations which define a clear path for future research. Firstly, the bi-level optimisation is more computationally intensive than FedAvg. Site level computational requirements remain minimal as each site performs standard gradient-based optimisation with negligible overhead from the ICNN regulariser. However, we demonstrated scalability to hundreds of sites (387 in CPRD, 150 in eICU) and scaling to thousands of simultaneously active sites requires further algorithmic optimisation. Secondly, for T3 deployment, detecting distribution shift between new inference-only sites and the federated training cohort remains an open challenge. Future work should develop methods to quantify when a site's data distribution falls outside the training manifold, potentially triggering recommendations for local fine-tuning to ensure reliable predictions. Thirdly, the learned ICNN prior currently functions as a ``black box''. A significant future direction is to develop methods to interpret this prior, as it can contain valuable latent information about the relationships and clusters within the healthcare network itself. Lastly, although FedMAP does not share raw data, it lacks formal privacy guarantees. The transmission of high dimensional model parameters can still be vulnerable to model inversion or membership inference attacks. Future work should integrate techniques such as differential privacy, secure aggregation and fine-grained access control to provide quantifiable security. \\

\noindent
FedMAP addresses both technical and operational barriers to healthcare FL deployment. By achieving consistent improvements as well as accommodating diverse institutional capabilities, which provides a path from proof of concept to production systems. The preferential improvements in underperforming regions suggest FL can reduce rather than amplify healthcare disparities.

\section*{Methods}\label{sec6}
\subsection*{FedMAP Framework: Theoretical Foundations}
Let us consider a prediction task in which the input space is denoted by $\mathcal{X}$ and the output space is $\mathcal{Y}$. In the FL framework, we have a collection of datasets $Z_k$ available for development, where
$$
Z_k = \{ (x^{(i)}_k, y^{(i)}_k)\}_{i=1}^{N_k} \in (\mathcal{X}\times \mathcal{Y})^{N_k}, \qquad
\text{for} \quad k = 1,\ldots , q,
$$
each of them consisting of an IID sampling from a probability distribution $\mathcal{D}_k$ in $\mathcal{X}\times \mathcal{Y}$.

\noindent
The learning procedure for FL with a central aggregator (most methods) consists of two stages, which can be repeated iteratively:
\begin{enumerate}
    \item \textit{Local training:} Each site $k$ trains a model by optimising model parameters $\theta_k$ to minimise a loss function $\mathcal{L}(\theta_k; \mathcal{Z}_k)$ that quantifies prediction error on its local dataset $\mathcal{Z}_k$, optionally incorporating knowledge from the global model.
    \item \textit{Aggregation:} The central server updates a global model based on the trained model parameters from local sites.
\end{enumerate}
Each iteration of this two-stage process is known as a \textit{communication round}.\\

\noindent
Let us consider a parametric family of models with a finite number of parameters, denoted by
$$
\mathcal{H} := \bigl\{ 
\phi_\theta : \mathcal{X} \to \mathcal{Y} \;\big|\; \theta\in \Theta
\bigr\},
$$
where $\Theta \subset \mathbb{R}^d$ is the parameter space.
The local training procedure consists of minimising, over $\theta\in \Theta$, a loss functional $\mathcal{L} (\theta, (x,y))$ which, to every model $\phi_\theta: \mathcal{X} \to \mathcal{Y}$ and any data point $(x,y)\in \mathcal{X}\times \mathcal{Y}$, assigns a nonnegative scalar that quantifies the discrepancy between the model prediction and the data. Since, typically, the output $y$ is assumed to be a random variable (e.g. a noisy version of ground truth $y^\ast$), the standard choice for
the loss functional during the local training is the negative log-likelihood, defined as
$$
\mathcal{L} (\theta, (x,y)) := -\log \mathbb{P} (y \mid \phi_\theta (x)),
$$
where $\mathbb{P} (y \mid \phi_\theta (x))$ denotes the likelihood of the random variable $y$ conditionally on the ground truth being $y^\ast = \phi_\theta (x)$. Considering empirical risk minimisation as the learning algorithm, the parameter $\theta_k^\ast$ for each local model $\phi_{\theta_k^\ast}$ can be written as the solution to the minimisation problem
\begin{equation}
\label{MLE problem}
\theta_k^\ast \;\in\; \arg \min_{\theta\in \Theta} \dfrac{1}{N_k} \sum_{i=1}^{N_k}
 \bigl[-\log \mathbb{P}\bigl( y_k^{(i)} \mid \phi_\theta (x_k^{(i)}) \bigr)\bigr] \, ,
\end{equation}
corresponding to a maximum likelihood estimator (MLE), i.e., the most likely model $\phi_\theta$ in $\mathcal{H}$ given the dataset $Z_k$.\\

\noindent
In non-PFL frameworks, a global model is generated during aggregation by combining all the models trained on local data. Then, the global model is used only as an initial guess for the next communication round. However, the minimisation problem in \eqref{MLE problem} is independent of the initialisation, and therefore, the information leveraged during the aggregation risks being forgotten during the local training.
To prevent this issue, in most FL methods one does not aim at solving \eqref{MLE problem} to completion during the local training. Instead, one would typically warm start from a global initialisation but limit the divergence of local parameters through regularisation.
In our approach, we remedy the issue of forgetting the global model by considering a parametric regularisation term in \eqref{MLE problem}, learned during the aggregation stage by combining the different local models. In this way, local updates remain anchored to global knowledge.

\subsubsection*{Local training as MAP estimation}
The MLE approach described above can be enhanced by incorporating a parametric regulariser into the minimisation problem in \eqref{MLE problem}, with the regulariser being estimated during the aggregation step.
This new approach can be interpreted in terms of MAP estimation.
We consider a parametric family of regulariser functions
$$
\mathcal{G} := \bigl\{ \mathcal{R} (\cdot; \gamma) : \Theta \to \mathbb{R}^+ \;\big|\; \gamma\in \Gamma \bigr\},
$$
where $\Gamma \subset \mathbb{R}^p$ is the parameter space for the regulariser.\\

\noindent
Let us denote by $\gamma^{(t)}$ the regulariser parameters after $t$ communication rounds.
If we apply regularised empirical risk minimisation over the dataset $Z_k = \{(x_k^{(i)},\,y_k^{(i)})\}_{i=1}^{N_k}$, the $k$-th site updates its local model to
\begin{equation}
\label{MAP estimation}
\theta_k^{(t+1)} \;\in\; 
\arg \min_{\theta\in \Theta} \Biggl\{
 \dfrac{1}{N_k} \underbrace{ \sum_{i=1}^{N_k} 
-\log \Bigl[\mathbb{P}\bigl(y_k^{(i)} \mid \phi_\theta (x_k^{(i)})\bigr)\Bigr]}_{\displaystyle \mathcal{L}(\theta;Z_k)}
\;+\; 
\mathcal{R} (\theta; \gamma^{(t)}) 
\Biggr\}.
\end{equation}
The parameter $\gamma$ in the regulariser $\mathcal{R} (\cdot; \gamma)$ is learned from all sites' datasets. This is done indirectly, during the aggregation stage, by combining all the local models $\{\theta_k^{(t)}\}_{k=1}^q$ from the previous communication round.\\

\noindent
The minimisation problem \eqref{MAP estimation} can be viewed as a MAP estimation associated to a prior distribution in the parameter space $\Theta$. Let us define the function
$$
\rho_\gamma (\theta) := e^{-\mathcal{R} (\theta; \gamma)}. 
$$
Under suitable growth assumptions\footnote{It is sufficient to assume that $\theta\mapsto \mathcal{R} (\theta; \gamma)$ grows at least linearly, i.e. there exist constants $c_1,c_2>0$ such that $\mathcal{R}(\theta; \gamma) \geq c_1 \|\theta\| - c_2$ for all $\theta \in \Theta$. This implies that the integral $\int_{\Theta} e^{-\mathcal{R}(\theta; \gamma)} d\theta$ is finite. Alternatively, one can also assume that the parameter space $\Theta$ is compact, in which case, no growth assumption on $\mathcal{R}(\cdot; \gamma)$ is required.} on $\mathcal{R} (\cdot; \gamma)$, the function
$$
\bar{\rho}_\gamma (\theta) :=  \dfrac{\rho_\gamma (\theta)}{\int_\Theta \rho_\gamma (\xi) d\xi}
$$
defines a probability distribution over $\Theta$.\\

\noindent
Given the prior probability distribution $\bar{\rho}_\gamma$, for any $(x,y)\in \mathcal{X}\times \mathcal{Y}$ and $\theta\in \Theta$, we can use Bayes’ theorem to compute (up to the normalising constant $\int_\Theta \rho_\gamma (\xi) d\xi$) the posterior probability 
$$
\mathbb{P} (\theta \mid (x,y)) 
\;\propto\; 
\mathbb{P}\bigl((x,y)\mid \theta\bigr)\,\rho_\gamma(\theta)
\;=\; 
\mathbb{P}\bigl(y \mid \phi_\theta(x)\bigr)\,\rho_\gamma(\theta) \, ,
$$
i.e. the probability of $\theta$ given $(x,y)$ and the prior $\rho_\gamma$.
Hence, given the dataset $Z_k = \{ (x_k^{(i)}, y_k^{(i)}) \}_{i=1}^{N_k}$, the negative log-likelihood of $\theta$ can be written (up to an additive constant) as 
\begin{align*}
-\log \left( \mathbb{P} (\theta | Z_k) \right)
& = 
-\sum_{i=1}^{N_k} \log\Bigl[\mathbb{P}\bigl(y_k^{(i)} \mid \phi_\theta (x_k^{(i)})\bigr)\Bigr]
\;-\; N_k\log\,\rho_\gamma(\theta)\\
&= \mathcal{L} (\theta; Z_k) \;+\; N_k \mathcal{R} (\theta;\gamma).
\end{align*}
Minimising the above quantity (scaled by $1/N_k$) yields \eqref{MAP estimation}.
Therefore, $\theta_k^{(t+1)}$ is the parameter that maximises the posterior probability given the dataset $Z_k$ and the prior distribution $\rho_{\gamma^{(t)}}$.


\subsubsection*{Estimating the prior during aggregation}

After each site trains its local model $\theta_k^{(t)}$ via \eqref{MAP estimation}, the server collects $\{\theta_k^{(t)}\}_{k=1}^q$. The global parameter $\gamma \in \Gamma$ is then updated according to~\eqref{prior MLE}, which minimises the negative log of the joint prior‐likelihood over these local parameters. Taking into account that each $\theta_k^{(t)}$ is estimated by using $N_k$ data points, we can write 
\[
-\log \left[ \mathbb{P} (\theta_1^\ast, \ldots, \theta_q^\ast | \gamma)\right] = 
\sum_{k=1}^q \bigl[-N_k\log \rho_\gamma(\theta_k^\ast)\bigr] = \sum_{k=1}^q N_k \mathcal{R} \bigl(\theta_k^{(t)}; \gamma \bigr).
\]
In practice, at communication round $t$, given the current $\gamma^{(t)}$ and the local models $\bigl\{\theta_k^{(t)}\bigr\}_{k=1}^q$, we take a gradient step as follows:
\begin{equation}
\label{prior MLE}
\gamma^{(t+1)} 
\;=\; 
\gamma^{(t)} 
\;-\; 
\lambda \sum_{k=1}^q N_k \nabla_{\gamma} \mathcal{R} (\theta_k^{(t)}; \gamma^{(t)}) \, , 
\end{equation}
where $\lambda>0$ is a suitable step size. Because $\mathcal{R} (\theta; \gamma)$ is a differentiable function on $\Theta\times \Gamma$, as stated in Assumption~\ref{assum: convexity}, the gradient $\nabla_{\gamma}\mathcal{R}(\theta; \gamma)$ is obtained by backpropagating through the chosen parametric form of the prior. Note that no raw data leaves a site, only its parameter vector $\theta_k^{(t)}$ and associated weights are transmitted to the server.

\subsubsection*{FL as bi-level optimisation}\label{subsec3}

Here, we provide convergence guarantees for FedMAP under suitable convexity assumptions specified in Assumption~\ref{assum: convexity}.
In particular, we prove that the updates of the local models $\{\theta_k^{(t)}\}_{k=1}^q$ and the regulariser parameter $\gamma^{(t)}$, obtained by iterating \eqref{MAP estimation} and \eqref{prior MLE}, converge to the solution to the bi-level optimisation problem:
\begin{equation}
\label{bi-level optimisation}
   \begin{aligned}
   \underset{\theta_k \in \Theta}{\operatorname{minimise}} & \ \mathcal{L} ({\theta_k}; Z_k) + N_k \mathcal{R} (\theta_k; \gamma^\ast) \quad \forall k=1, \ldots, q \\
   \text{s.t.} \quad & \gamma^\ast \in \arg\min_{\gamma\in \Gamma} \left( \sum_{k=1}^q N_k \mathcal{R} (\theta_k; \gamma) \right).
   \end{aligned}
\end{equation}
\noindent
Therefore, local training \eqref{MAP estimation} and aggregation \eqref{prior MLE} can be seen as an alternating strategy to approximate the solution to this bi-level optimisation problem. Local training addresses the upper-level problems (optimising local models given the current regulariser), while the aggregation step addresses the lower-level problem (learning the optimal regulariser parameters given the local models). 

The main idea is to first prove that the updates of the regulariser parameter $\gamma(t)$, given in \eqref{MAP estimation} and \eqref{prior MLE}, correspond to gradient descent iterations applied to the function $M(\gamma)$ defined as
\begin{equation}
\label{M of gamma def}
M(\gamma) := \sum_{k=1}^q M_k (\gamma ; Z_k), \quad \text{where} \quad
M_k(\gamma; Z_k) = \min_{\theta\in \Theta} \left\{\mathcal{L} (\theta; Z_k) + N_k \mathcal{R} (\theta; \gamma)\right\}.
\end{equation}
Then, using the convexity assumptions on $\mathcal{L}(\theta; Z_k)$ and $\mathcal{R}(\theta; \gamma)$ one can prove that the function $M(\gamma)$ is strongly convex, and hence, with a suitable learning rate, gradient descent iterations converge to the unique minimiser of $M(\gamma)$, denoted by $\gamma^\ast$. Finally, we conclude that the local models $\theta_k^\ast$ associated to the local training \eqref{MAP estimation} with regulariser parameter $\gamma^\ast$ provide the solution to the bi-level optimisation problem \eqref{bi-level optimisation}.

Next we present the statement of our main theoretical contribution. The complete proof is given in Appendix \ref{secA1}.

\begin{theorem}
    \label{thm:main paper}
    Let $\Theta\subset \mathbb{R}^d$ be compact and convex, and let $\Gamma\subset \mathbb{R}^p$ be convex. Assume that $\theta\mapsto \mathcal{L}(\theta; Z_k)$ is continuous and convex, and that $(\theta, \gamma)\mapsto \mathcal{R}(\theta; \gamma)$ is differentiable and strongly convex\footnote{See \eqref{strong convexity} in Appendix \ref{secA1} for the precise definition of strong convexity.}. Then,
    \begin{enumerate}
        \item the FedMAP updates of $\gamma^{(t)}$, given by \eqref{MAP estimation} and \eqref{prior MLE}, correspond to gradient descent iterations applied to the function $M(\gamma)$ defined in \eqref{M of gamma def}, i.e.
        $$
        \gamma^{(t+1)} = \gamma^{(t)} - \lambda \nabla_\gamma M(\gamma),
        $$
        \item the function $M(\gamma)$ is strongly convex in $\Gamma$.
        \item the unique minimiser $\gamma^\ast$ of $M(\gamma)$ is such that the local models $(\theta_1^\ast, \ldots , \theta_q^\ast)$ given by
$$\theta_k^\ast := \theta_k^{\ast}(\gamma^\ast) = \arg\min_{\theta\in \Theta} \left\{ \dfrac{1}{N_k}\mathcal{L} (\theta; Z_k) + \mathcal{R} (\theta; \gamma^\ast)\right\} , \qquad \text{for all}\ k=1,\ldots , q,$$
are the unique solution of the bi-level optimisation problem \eqref{bi-level optimisation}.
\end{enumerate}
\end{theorem}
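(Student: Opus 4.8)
The plan is to prove the three claims in order, the engine being the value-function calculus for the inner minimisation (Danskin's theorem) together with the fact that partial minimisation over $\theta$ preserves the \emph{strong} convexity of $\mathcal{R}$ in $\gamma$. For the first claim, fix $k$ and put $F_k(\theta,\gamma):=\mathcal{L}(\theta;Z_k)+N_k\mathcal{R}(\theta;\gamma)$ on $\Theta\times\Gamma$. Since $\mathcal{L}(\cdot;Z_k)$ is continuous and convex and $\mathcal{R}(\cdot;\gamma)$ is strongly convex, $\theta\mapsto F_k(\theta,\gamma)$ is continuous and strongly convex on the compact set $\Theta$, so it has a unique minimiser $\theta_k^\ast(\gamma)$ and $M_k(\gamma)=F_k(\theta_k^\ast(\gamma),\gamma)$. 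Joint continuity of $F_k$, compactness of $\Theta$ and uniqueness give, via Berge's maximum theorem, that $\gamma\mapsto\theta_k^\ast(\gamma)$ is continuous, and Danskin's theorem then yields $\nabla_\gamma M_k(\gamma)=\nabla_\gamma F_k(\theta_k^\ast(\gamma),\gamma)=N_k\nabla_\gamma\mathcal{R}(\theta_k^\ast(\gamma);\gamma)$, hence $\nabla_\gamma M(\gamma)=\sum_{k=1}^q N_k\nabla_\gamma\mathcal{R}(\theta_k^\ast(\gamma);\gamma)$. Reading the scheme so that at round $t$ each site first solves \eqref{MAP estimation} exactly for the current $\gamma^{(t)}$, the transmitted parameter is $\theta_k^\ast(\gamma^{(t)})$, so the aggregation update \eqref{prior MLE} becomes $\gamma^{(t+1)}=\gamma^{(t)}-\lambda\sum_k N_k\nabla_\gamma\mathcal{R}(\theta_k^\ast(\gamma^{(t)});\gamma^{(t)})=\gamma^{(t)}-\lambda\nabla_\gamma M(\gamma^{(t)})$, which is claim 1.

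For claim 2, the key observation is that $F_k$ is \emph{jointly} $N_k\mu$-strongly convex on $\Theta\times\Gamma$: $\mathcal{L}(\theta;Z_k)$ is convex and independent of $\gamma$, while $N_k\mathcal{R}$ is jointly $N_k\mu$-strongly convex by hypothesis. For $\gamma_0,\gamma_1\in\Gamma$ and $s\in[0,1]$, with $\theta_i=\theta_k^\ast(\gamma_i)$, $\gamma_s=(1-s)\gamma_0+s\gamma_1\in\Gamma$ and $\theta_s=(1-s)\theta_0+s\theta_1\in\Theta$,
\[
M_k(\gamma_s)\le F_k(\theta_s,\gamma_s)\le (1-s)M_k(\gamma_0)+sM_k(\gamma_1)-\tfrac{N_k\mu}{2}s(1-s)\bigl(\|\theta_1-\theta_0\|^2+\|\gamma_1-\gamma_0\|^2\bigr),
\]
and discarding the nonnegative $\|\theta_1-\theta_0\|^2$ term shows $M_k$ is $N_k\mu$-strongly convex on $\Gamma$; summing over $k$ gives that $M$ is $\bigl(\mu\sum_k N_k\bigr)$-strongly convex. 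I expect this to be the main obstacle: partial minimisation trivially preserves ordinary convexity, but it preserves \emph{strong} convexity only because the modulus of $\mathcal{R}$ is assumed jointly in $(\theta,\gamma)$, and one must carry along the $\|\theta_1-\theta_0\|^2$ slack correctly (which is precisely why the joint hypothesis, rather than separate strong convexity in $\theta$ and in $\gamma$, is needed).

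For claim 3, strong convexity of $M$ on $\Gamma$ (closed convex, which strong convexity renders harmless through coercivity) gives a unique minimiser $\gamma^\ast$, characterised by $\langle\nabla_\gamma M(\gamma^\ast),\gamma-\gamma^\ast\rangle\ge 0$ for all $\gamma\in\Gamma$. Put $\theta_k^\ast:=\theta_k^\ast(\gamma^\ast)$; by construction these solve the upper-level problems. For the lower level, $\gamma\mapsto\sum_k N_k\mathcal{R}(\theta_k^\ast;\gamma)$ is convex with gradient at $\gamma^\ast$ equal to $\sum_k N_k\nabla_\gamma\mathcal{R}(\theta_k^\ast(\gamma^\ast);\gamma^\ast)=\nabla_\gamma M(\gamma^\ast)$ by claim 1, so the same variational inequality is its first-order optimality condition and $\gamma^\ast$ attains $\min_{\gamma\in\Gamma}\sum_k N_k\mathcal{R}(\theta_k^\ast;\gamma)$; hence $(\theta_1^\ast,\dots,\theta_q^\ast)$ (with $\gamma^\ast$) solves \eqref{bi-level optimisation}. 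Conversely, any solution $(\tilde\theta_k,\tilde\gamma)$ satisfies $\tilde\theta_k=\theta_k^\ast(\tilde\gamma)$ and $\tilde\gamma\in\arg\min_\gamma\sum_k N_k\mathcal{R}(\theta_k^\ast(\tilde\gamma);\gamma)$, which forces $\nabla_\gamma M(\tilde\gamma)$ to satisfy the optimality inequality of $M$, whence $\tilde\gamma=\gamma^\ast$ and $\tilde\theta_k=\theta_k^\ast$: the solution is unique. Combining claims 1--2 with the standard geometric convergence of gradient descent on a strongly convex objective (for $\lambda$ below $1/L$ with $L$ a Lipschitz constant of $\nabla_\gamma M$, available under the stated regularity of $\mathcal{R}$) then gives $\gamma^{(t)}\to\gamma^\ast$ and $\theta_k^{(t)}=\theta_k^\ast(\gamma^{(t)})\to\theta_k^\ast$, i.e. the FedMAP iterates converge to the bi-level solution.
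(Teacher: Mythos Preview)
Your proof is correct and follows essentially the same route as the paper: Danskin's theorem for claim~1, the midpoint/convex-combination argument exploiting joint strong convexity of $\mathcal{R}$ (and dropping the $\|\theta_1-\theta_0\|^2$ term) for claim~2, and the observation that first-order optimality for $M$ coincides with first-order optimality for the lower-level problem for claim~3. Your treatment of claim~3 is in fact slightly tidier---you construct the bi-level solution directly from the minimiser of $M$ and use variational inequalities to handle possible boundary effects, whereas the paper passes through a minimising-sequence argument for existence and writes the optimality condition as $\nabla_\gamma M(\gamma^\ast)=0$.
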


\noindent
The fact that the updates of $\gamma^{(t)}$ correspond to gradient descent iterations of the strongly convex function $M(\gamma)$ imply that, by choosing a sufficiently small learning rate $\lambda$, $\gamma^{(t)}$ converges, as $t\to \infty$, to the unique minimiser of $M(\gamma)$. Hence, in view of the third point in Theorem \ref{thm:main paper}, we deduce the following corollary.

\begin{corollary}
    \label{cor: convergence to bi-level opt}
    Under the assumptions of Theorem \ref{thm:main paper}, there exists $\lambda_0>0$ such that, if we take $\lambda\in (0,\lambda_0)$ in \eqref{prior MLE}, then the FedMAP iterates for $\theta_k^{(t)}$ and $\gamma^{(t)}$ given by \eqref{MAP estimation} and \eqref{prior MLE} converge, as $t\to \infty$, to the unique solution to the bi-level optimisation problem \eqref{bi-level optimisation}.
\end{corollary}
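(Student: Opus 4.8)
The plan is to obtain the Corollary as a short consequence of Theorem~\ref{thm:main paper}, reducing it to the textbook convergence theory of gradient descent on a smooth strongly convex function together with a continuity argument for the inner minimisers. First I would record the two facts I am allowed to assume: by Theorem~\ref{thm:main paper}(1) the FedMAP update of the prior parameter is exactly the plain gradient step $\gamma^{(t+1)} = \gamma^{(t)} - \lambda\,\nabla_\gamma M(\gamma^{(t)})$ on the function $M$ of \eqref{M of gamma def}, and by Theorem~\ref{thm:main paper}(2) this $M$ is strongly convex on $\Gamma$, say with modulus $\mu_M>0$. Strong convexity (together with $\Gamma$ being closed, as the unconstrained update \eqref{prior MLE} implicitly requires — indeed one may simply take $\Gamma=\mathbb{R}^p$) guarantees that $M$ is coercive, hence has a unique minimiser $\gamma^\ast$, and that every sublevel set of $M$ is compact; in particular, once $\lambda$ is small enough that the gradient step is nonincreasing in $M$, all iterates remain in the compact set $S_0 := \{\gamma\in\Gamma : M(\gamma)\le M(\gamma^{(0)})\}$. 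Via the envelope identity underlying part~(1), $\nabla_\gamma M(\gamma) = \sum_{k=1}^q N_k\,\nabla_\gamma \mathcal{R}\bigl(\theta_k^\ast(\gamma);\gamma\bigr)$, where $\theta_k^\ast(\gamma):=\arg\min_{\theta\in\Theta}\{\tfrac1{N_k}\mathcal{L}(\theta;Z_k)+\mathcal{R}(\theta;\gamma)\}$; using the standard perturbation bound for strongly convex problems, $\gamma\mapsto\theta_k^\ast(\gamma)$ is Lipschitz on compacta, so $\nabla_\gamma M$ is Lipschitz on $S_0$ with some constant $L$.

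Next I would choose $\lambda_0:=1/L$ (any $\lambda$ with $\lambda\le 2/(L+\mu_M)$ works) and invoke the classical linear-convergence estimate for gradient descent on an $L$-smooth, $\mu_M$-strongly convex objective: for $\lambda\in(0,\lambda_0)$,
\[
\|\gamma^{(t)}-\gamma^\ast\| \;\le\; (1-\lambda\mu_M)^{\,t}\,\|\gamma^{(0)}-\gamma^\ast\| \;\xrightarrow[t\to\infty]{}\; 0,
\]
so $\gamma^{(t)}\to\gamma^\ast$. For the local models, \eqref{MAP estimation} identifies $\theta_k^{(t)}$ with $\theta_k^\ast(\gamma^{(t)})$, and this minimiser is unique because $\theta\mapsto \tfrac1{N_k}\mathcal{L}(\theta;Z_k)+\mathcal{R}(\theta;\gamma)$ is strongly convex in $\theta$ (a convex $\mathcal{L}$ plus the strongly convex $\mathcal{R}(\cdot;\gamma)$); since the objective is jointly continuous and $\Theta$ is compact, Berge's maximum theorem makes $\gamma\mapsto\theta_k^\ast(\gamma)$ single-valued and continuous, so $\theta_k^{(t)}=\theta_k^\ast(\gamma^{(t)})\to\theta_k^\ast(\gamma^\ast)$ for every $k$.

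Finally I would close the loop with Theorem~\ref{thm:main paper}(3): the tuple $\bigl(\theta_1^\ast(\gamma^\ast),\dots,\theta_q^\ast(\gamma^\ast)\bigr)$, together with $\gamma^\ast$, is precisely the unique solution of the bi-level problem \eqref{bi-level optimisation}. Combining this with the two limits just established shows that the FedMAP iterates $\bigl(\theta_1^{(t)},\dots,\theta_q^{(t)},\gamma^{(t)}\bigr)$ converge to that unique solution, which is the assertion of the Corollary; as a by-product one also gets a geometric rate for $\gamma^{(t)}$.

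I expect the only genuinely delicate point to be the claim that $\nabla_\gamma M$ is Lipschitz on the compact sublevel set $S_0$ (which is what fixes $\lambda_0$): the hypotheses of Theorem~\ref{thm:main paper} only ask $\mathcal{R}$ to be differentiable, whereas the perturbation estimate for $\gamma\mapsto\theta_k^\ast(\gamma)$ — and hence the $L$-smoothness of $M$ — uses slightly more, e.g. local Lipschitz continuity of $\nabla\mathcal{R}$. This is benign for the priors actually used here (ICNNs are smooth), but it is the step I would write out most carefully, spelling out the extra regularity that is being invoked; everything else in the argument is a routine assembly of the three conclusions of Theorem~\ref{thm:main paper}.
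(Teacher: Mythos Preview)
Your proposal is correct and follows the same route as the paper: the paper's entire argument is the one sentence preceding the corollary, which simply invokes convergence of gradient descent on the strongly convex $M$ (parts (1)--(2) of Theorem~\ref{thm:main paper}) and then appeals to part (3). Your write-up is in fact more rigorous than the paper's --- you supply the Lipschitz-smoothness of $\nabla_\gamma M$ needed to pin down $\lambda_0$ and the continuity of $\gamma\mapsto\theta_k^\ast(\gamma)$ to transfer convergence of $\gamma^{(t)}$ to the local models, both of which the paper leaves implicit.
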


\noindent
The function $M_k (\gamma; Z_k)$ in \eqref{M of gamma def} can be seen as an envelope function~\cite{danskin_theory_1966, bernhard_theorem_1995} associated with the loss $\mathcal{L} (\theta;Z_k)$ and the parametric regulariser $\mathcal{R} (\theta; \gamma)$. Note that, with the particular choice of the quadratic regulariser $\mathcal{R} (\theta; \gamma) = \| \theta - \gamma\|^2$, the function $M_k (\gamma; Z_k)$ would be precisely to the so-called Moreau envelope of $\mathcal{L} (\theta;Z_k)$. Here, we consider a more general class of regularisers, allowing more adaptability for the prior distribution in the parameter space.
Minimising a linear combination of envelope functions such as $M(\gamma)$ to train a global model differs fundamentally from standard FL approaches methods such as FedAvg~\cite{mcmahan_communication-efficient_2016}  and FedProx~\cite{li2020federated}, which minimise a linear combination of local loss functions, i.e., $F (\theta) = \sum_{k=1}^q w_k \mathcal{L} (\theta; Z_k)$.
In our implementation, the convexity assumptions on the parametric  regulariser are guaranteed by using ICNNs~\cite{pmlr-v70-amos17b}.

\subsection*{Algorithmic Implementation}
\label{sec:fedmap_algo}

We parametrise the local models using neural networks. Specifically, we define the hypothesis set as a family of neural networks:
$$
\phi (\cdot \, ; \theta): \mathcal{X} \to \mathcal{Y}, \qquad \theta \in \Theta := \mathbb{R}^d.
$$
As a parametric regulariser $\mathcal{R} (\theta; \gamma)$, we use a combination of an \textbf{Input Convex Neural Network (ICNN)}~\cite{pmlr-v70-amos17b} with parameters $\psi \in \Psi$ and a quadratic penalty term with respect to a global model that we call $\mu$. More precisely, we consider parametric regularisers of the form 
\begin{equation}
\label{ICNN def}
\mathcal{R}(\theta; \mu, \psi) = f_\psi(\theta, \mu) + \alpha\|\theta - \mu\|^2 + \epsilon(\|\theta\|^2 + \|\mu\|^2),
\end{equation}
where the parameter is of the form $\gamma = (\mu, \psi) \in \Gamma := \Theta\times \Psi$.
Here, $f_\psi: \Theta\times \Theta \to \mathbb{R}$ is an NN with non-negative weights (except for the first layer) and convex activation functions. The hyperparameters $\alpha,\varepsilon > 0$ provide strong convexity.
The ICNN is designed to be convex with respect to both $\theta$ and $\mu$ for any fixed $\psi$.
The use of an ICNN for the parametric regulariser enhances the convexity assumption in Theorem \ref{thm:main paper}, and allows FedMAP to learn an adaptive regularisation landscape that can capture the complex inter-site relationships in heterogeneous federated systems. This adaptive nature is crucial for effectively addressing combinations of non-IID data distributions.\\

\noindent
Unlike traditional quadratic regularisers, that impose a fixed geometric structure~\cite{li2020federated}, the ICNN component $f_\psi$ learns a data-driven regularisation landscape that can capture complex inter-site relationships and adapt to the specific non-IID characteristics of the federated dataset.\\

\noindent
While some PFL approaches employ fixed-form regularisers such as the L2 penalty corresponding to a Gaussian prior with fixed variance, these pre-specified structures often lack the adaptability to capture the diverse inter-site relationships in heterogeneous federated systems. FedMAP defines its regulariser $\mathcal{R}(\theta; \mu, \psi)$ using an ICNN~\cite{pmlr-v70-amos17b}, a class of neural networks specifically constructed such that their output function is convex with respect to a designated subset of their inputs. In our framework, $\mu \in  \Theta$ represents the global model parameters, serving as a reference point around which local parameters $\theta$ are regularised. $\psi \in \Psi$ are the learnable parameters of the ICNN itself. The ICNN's design ensures that for any fixed $\psi$, the function $(\theta, \mu) \mapsto \mathcal{R}(\theta; \mu, \psi)$ is convex. The prior distribution associated to the model parameter $\theta$ is then defined as 
$$
\rho_{\gamma} (\theta) = \exp \left(- \mathcal{R}(\theta; \mu, \psi)\right), \qquad
\text{where $\gamma = (\mu, \psi)$.}
$$

\noindent
A significant advantage of employing an ICNN is its capacity to learn a more adaptive and data-driven regularisation landscape compared to a fixed penalty. Both the global model parameters $\mu$ and the ICNN parameters $\psi$ are updated at the server during the aggregation step. This allows FedMAP to learn not only a global consensus model but also the appropriate structure of regularisation tailored to the specific FL task.\\

\noindent
\subsection*{Three-Tier Service Architecture for Diverse Institutional Capabilities}
\subsubsection*{T1: Full FL}
The complete FedMAP algorithm consists of three main steps: (i) initialisation, (ii) local optimisation, and (iii) global aggregation, as outlined in Algorithm~\ref{algo:map_overview}.

\paragraph{Initialisation} Site $j$ is randomly selected, and its parameters are used to initialise the global parameters $\mu^{(0)}$ and the local parameters for all sites, i.e., $\theta_k^{(0)} = \theta_j^{(0)}$ for all $k$. The ICNN parameters $\psi^{(0)}$ are also initialised (e.g., using a standard neural network initialisation scheme). These initial parameters $\gamma^{(0)} = (\mu^{(0)}, \psi^{(0)})$, and $\{\theta_k^{(0)}\}_{k=1}^q$ are then shared to all partner sites, ensuring a common starting point for all sites.

\begin{algorithm}[htb]
\caption{FedMAP - T1: Full FL}
\label{algo:map_overview}
\begin{algorithmic}[1]
\State \textbf{Input:} $q$ (Total number of sites), $\Psi$ (ICNN parameter space), $T$ (Number of communication rounds)
\State \textbf{Initialisation:}
\State Randomly select site $j$ from $\{1, \ldots, q\}$
\State Initialise $\mu^{(0)}$ and all $\theta_k^{(0)}$ based on site $j$'s model.  Randomly initialise $\psi^{(0)} \in \Psi$.
\State Broadcast $\gamma^{(0)} = (\mu^{(0)}$, $\psi^{(0)})$, and $\{\theta_k^{(0)}\}_{k=1}^q$ to all sites.
\For{$t = 0$ to $T-1$}
    \For{$k = 1$ to $q$ \textbf{in parallel}}
        \State $\theta_k^{(t+1)}, \omega_k^{(t)} \leftarrow \Call{LocalOptimisation}{\theta_k^{(t)}, \mu^{(t)}, \psi^{(t)}, Z_k}$ \Comment{Algorithm~\ref{algo:local_optimisation}, with early stopping}
    \EndFor
        \State $\gamma^{(t+1)} = (\mu^{(t+1)}, \psi^{(t+1)}) \leftarrow \Call{GlobalAggregation}{\{\theta_k^{(t+1)}, \omega_k^{(t)} \}_{k=1}^q, \mu^{(t)}, \psi^{(t)}}$ \Comment{Algorithm~\ref{algo:global_aggregation}}
\EndFor
\State \textbf{Output:} Final parameters: global $\mu^{(T)}$, ICNN $\psi^{(T)}$, personalised $\{\theta_k^{(T)}\}_{k=1}^q$
\end{algorithmic}
\end{algorithm}

\paragraph{Local Optimisation} Each site $k$ optimises their model parameters $\theta_k^{(t+1)}$ by 

\begin{equation}
\begin{split}
    \label{eq:local_opt_objective_icnn}
    \theta_k^{(t+1)} & = \arg \min_{\theta \in \Theta} \left\{ \dfrac{1}{N_k} \sum_{i=1}^{N_k} -\log \left[\mathbb{P}\big(y_k^{(i)}|\phi_\theta(x_k^{(i)})\big) \right] + \mathcal{R}(\theta; \mu^{(t)}, \psi^{(t)}) \right\} \\
    & = \arg \min_{\theta \in \Theta} \left\{ \frac{1}{N_k}  \mathcal{L} \left( \theta; Z_k\right) + \underbrace{\mathcal{R}(\theta; \mu^{(t)}, \psi^{(t)})}_{\text{Learned regulariser}} \right\} \; ,
\end{split}
\end{equation}

\noindent
minimising the negative log-likelihood of the posterior distribution. This objective combines the local data likelihood with the global regulariser
where $\mathcal{L} \left( \theta; Z_k\right)$ is the local loss function, and $N_k$ is the number of data points in $Z_k$. The term $\mathcal{R}(\theta; \mu^{(t)}, \psi^{(t)})$, defined by the ICNN, penalises deviations of the local model $\theta$ from the global model $\mu^{(t)}$, where the ICNN parameters $\psi^{(t)}$ adaptively learn the regularisation landscape that captures inter-site relationships in the federated network. Each site iteratively updates its local parameters $\theta_k$ for a fixed number of local epochs $e$, as detailed in Algorithm~\ref{algo:local_optimisation}.\\

\noindent
After optimising the local model, each site computes a weighting factor
\begin{equation}
    \label{eq:weighting_factor_icnn}
    \omega_k^{(t)} = \mathbb{P}(Z_k|\theta_k^{(t+1)}) \times \exp(-\mathcal{R}(\theta_k^{(t+1)}; \mu^{(t)}, \psi^{(t)})) \; .
\end{equation}

\noindent
This factor reflects the quality of the local model $\theta_k^{(t+1)}$ under both the local data likelihood and the current global prior (defined by $\mu^{(t)}$ and $\psi^{(t)}$),
such that $\omega_k^{(t)} \propto \mathbb{P}(\theta_k^{(t+1)} \mid Z_k)$, which favours sites whose local models explain their data well and remain close to the globally learned prior encoded by $\mathcal{R}$.
The locally optimised parameters $\theta_k^{(t+1)}$ and their corresponding weighting factor $\omega_k^{(t)}$ are then transmitted to the central server.

\begin{algorithm}[htb]
\caption{Local optimisation at site $k$}
\label{algo:local_optimisation}
\begin{algorithmic}[1]
\State \textbf{Input:} Local parameters $\theta_k^{(t)}$, global model parameters $\mu^{(t)}$, ICNN parameters $\psi^{(t)}$, local dataset $Z_k$, number of local epochs $e$.
\State Initialise $\theta \leftarrow \theta_k^{(t)}$.
\For{$epoch = 1$ to $e$}
    \State $\theta \leftarrow \arg \displaystyle\min_{\theta' \in \Theta} \left\{ \frac{1}{N_k} \mathcal{L} (\theta; Z_k) + \mathcal{R}(\theta'; \mu^{(t)}, \psi^{(t)}) \right\}$ \Comment{e.g., via SGD steps}
\EndFor
\State $\theta_k^{(t+1)} \leftarrow \theta$.
\State $\omega_k^{(t)} \leftarrow \mathbb{P}(Z_k|\theta_k^{(t+1)}) \times \exp(-\mathcal{R}(\theta_k^{(t+1)}; \mu^{(t)}, \psi^{(t)}))$.
\State \Return $\theta_k^{(t+1)}, \omega_k^{(t)}$.
\end{algorithmic}
\end{algorithm}

\paragraph{Global Aggregation} The server aggregates the optimised local model parameters $\{\theta_k^{(t+1)}\}_{k=1}^q$ from all sites to update the regulariser parameters $\gamma^{(t+1)} = (\mu^{(t+1)}, \psi^{(t+1)})$: the global model parameters $\mu^{(t+1)}$ and the ICNN parameters $\psi^{(t+1)}$, as detailed in Algorithm~\ref{algo:global_aggregation}.
The global model parameters $\mu^{(t+1)}$ are updated as a weighted average of the received local model parameters:
\begin{equation}
\label{eq:gamma_update_icnn_agg}
\mu^{(t+1)} = \frac{\sum_{k=1}^q \omega_k^{(t)} \theta_k^{(t+1)}}{\sum_{j=1}^q \omega_j^{(t)}}.
\end{equation}
The ICNN parameters $\psi^{(t+1)}$ are updated by performing multiple gradient descent steps on the lower-level objective of the bi-level optimisation. Specifically, $\psi$ is updated to minimise the weighted sum of regulariser values applied to the new local models $\theta_k^{(t+1)}$ with respect to the new global model $\mu^{(t+1)}$:
\begin{equation}
\label{eq:psi_update_icnn_agg}
\psi^{(t+1)} = \psi^{(t)} - \lambda_{\psi} \nabla_{\psi} \left[ \sum_{k=1}^q \frac{\omega_k^{(t)}}{\sum_{j=1}^q \omega_j^{(t)}} \mathcal{R}(\theta_k^{(t+1)}; \mu^{(t+1)}, \psi^{(t)}) \right],
\end{equation}
where $\lambda_{\psi}$ is a learning rate for the ICNN parameters.\\

\noindent
\textbf{Remark.} These updates in Equations~\eqref{eq:gamma_update_icnn_agg} and~\eqref{eq:psi_update_icnn_agg} are designed to approximate gradient descent steps on $M(\mu, \psi)$ as proven in Theorem~\ref{thm:main paper}. The parameter $\psi$ is updated by taking a weighted average of the gradients of each term $\mathcal{R}(\theta_k; \mu, \psi)$ with respect to $\psi$, whilst the global model parameter $\mu$ is updated by minimising the function $\mu \mapsto \sum_{k=1}^q \omega_k \| \theta_k - \mu\|^2$.

\noindent The updated global model parameters $\mu^{(t+1)}$ and ICNN parameters $\psi^{(t+1)}$ are then broadcast to all sites for the subsequent round of local optimisation.

\begin{algorithm}[htb]
\caption{Global aggregation at server}
\label{algo:global_aggregation}
\begin{algorithmic}[1]
\State \textbf{Input:} Local models $\{\theta_k^{(t+1)}\}_{k=1}^q$, weights $\{\omega_k^{(t)}\}_{k=1}^q$, current global model $\mu^{(t)}$, current ICNN params $\psi^{(t)}$.
\State $\mu^{(t+1)} \leftarrow \dfrac{\sum_{k=1}^q \omega_k^{(t)} \theta_k^{(t+1)}}{\sum_{j=1}^q \omega_j^{(t)}}$.
\State $\psi_0 \leftarrow \psi^{(t)}$.
\For{step $s=1$ to $S_{agg}$} \Comment{$S_{agg}$ is number of aggregation steps for $\psi$}
    \State $\psi_s \leftarrow \psi_{s-1} - \lambda_{\psi} \nabla_{\psi} \left[ \sum_{k=1}^q \frac{\omega_k^{(t)}}{\sum_{j=1}^q \omega_j^{(t)}} \mathcal{R}(\theta_k^{(t+1)}; \mu^{(t+1)}, \psi_{s-1}) \right]$.
\EndFor
\State $\psi^{(t+1)} \leftarrow \psi_{S_{agg}}$.
\State Broadcast $\mu^{(t+1)}$ and $\psi^{(t+1)}$ to all sites.
\end{algorithmic}
\end{algorithm}

\noindent
These steps of local optimisation and global aggregation are repeated iteratively until a predefined number of communication rounds is reached. The final personalised model for each site $k$ is its last updated local model $\theta_k^{(T)}$, where $T$ is the total number of rounds.

\subsubsection*{T2: Local Fine-tuning with Learned Prior}
For sites with limited networking capabilities or those unable to participate in federated rounds, T2 enables local fine-tuning using the learned global prior from FedMAP. These sites receive the pre-trained global model $\mu^{(T)}$ and ICNN parameters $\psi^{(T)}$ from completed federated training, then perform local MAP optimisation without further communication.

\begin{algorithm}[htb]
\caption{FedMAP - T2: Local Fine-tuning}
\label{algo:tier2_finetuning}
\begin{algorithmic}[1]
\State \textbf{Input:} Pre-trained global model $\mu^{(T)}$, ICNN parameters $\psi^{(T)}$ from T1, local dataset $Z_{\text{new}}$, fine-tuning epochs $e_{\text{ft}}$, learning rate $\eta_{\text{ft}}$
\State \textbf{Initialisation:}
\State $\theta^{(0)} \leftarrow \mu^{(T)}$ \Comment{Initialise with global model}
\For{epoch $t = 1$ to $e_{\text{ft}}$}
    \State Sample mini-batch $B \subset Z_{\text{new}}$
    \State Compute gradient: $g \leftarrow \nabla_\theta \left[ \frac{1}{|B|} \sum_{(x,y) \in B} \mathcal{L}(y, \phi(x; \theta^{(t-1)})) + \mathcal{R}(\theta^{(t-1)}; \mu^{(T)}, \psi^{(T)}) \right]$
    \State Update: $\theta^{(t)} \leftarrow \theta^{(t-1)} - \eta_{\text{ft}} \cdot g$
\EndFor
\State \textbf{Output:} Fine-tuned personalised model $\theta^{(e_{\text{ft}})}$
\end{algorithmic}
\end{algorithm}

\noindent
The fine-tuning process in Algorithm~\ref{algo:tier2_finetuning} leverages the learned regularisation structure from federated training, enabling effective personalisation even with limited local data. The ICNN-based regulariser $\mathcal{R}(\theta; \mu^{(T)}, \psi^{(T)})$ acts as an informative prior that captures the statistical structure learned from the federated network, preventing overfitting while allowing adaptation to local data characteristics.

\subsubsection*{T3: Inference-only}
For sites with minimal computational resources or insufficient local data, T3 provides direct access to the pre-trained global model $\mu^{(T)}$ for inference-only. This tier requires no local training or fine-tuning, making FL benefits accessible to all healthcare providers.

\begin{algorithm}[htb]
\caption{FedMAP - T3: Inference-only}
\label{algo:tier3_inference}
\begin{algorithmic}[1]
\State \textbf{Input:} Pre-trained global model $\mu^{(T)}$ from Tier 1, test dataset $Z_{\text{test}}$
\State \textbf{Inference:}
\For{each $(x, y) \in Z_{\text{test}}$}
    \State Compute prediction: $\hat{y} \leftarrow \phi(x; \mu^{(T)})$
\EndFor
\State \textbf{Output:} Predictions $\{\hat{y}\}$ for all samples in $Z_{\text{test}}$
\end{algorithmic}
\end{algorithm}

\noindent
Algorithm~\ref{algo:tier3_inference} provides a direct deployment path for resource-constrained sites. Although it does not offer personalisation benefits, it ensures that all clinical sites can access AI-driven clinical decision support developed through collaborative learning, regardless of their computational infrastructure or technical expertise.\\

\noindent
This multi-tier architecture ensures that FedMAP's benefits extend across the entire healthcare ecosystem, from well-resourced academic medical centres to community clinics with limited infrastructure.

\subsection*{Experiments}\label{sec4}
\subsubsection*{Datasets}
\textbf{Clinical Practice Research Datalink - Hospital Episode Statistics (CPRD-HES).} The CPRD~\cite{herrett_data_2015} represents one of the largest longitudinal databases of anonymised electronic health records from UK primary care, encompassing over 11.3 million patients from 674 general practitioner (GP) practices. Patients in the dataset are broadly representative of national demographics in terms of age, sex, and ethnicity.\\

\noindent
For our FL experiments, we utilised the CPRD primary care data linked with Hospital Episode Statistics (HES) and Office for National Statistics (ONS) mortality records, following the methodological framework established by Xu et al.~\cite{xu_prediction_2021} for 10-year cardiovascular disease (CVD) risk prediction.
The data includes longitudinal health information including demographics, symptoms, diagnoses, prescriptions, test results, and referrals.\\

\noindent
We adopted Xu et al.'s~\cite{xu_prediction_2021} landmark modelling approach to construct our analysis cohort, extracting risk factors at multiple landmark ages (40, 45, 50, 55, 60, 65, 70, 75, 80, and 85 years). For each landmark age, we included patients with no prior CVD diagnosis or death. Risk factors extracted at each landmark age included measurements of systolic blood pressure (SBP), total cholesterol, high-density lipoprotein (HDL) cholesterol, and binary indicators for smoking status, hypertension medication prescription, and diabetes diagnosis. The landmark datasets were subsequently stacked to create a cohort spanning multiple age groups, following Xu et al~\cite{xu_prediction_2021}.\\

\noindent
We retained only patients with complete records at each landmark age, defined as having all risk factor measurements recorded within one year prior to the landmark age and an available outcome variable. After applying this filtering criterion, the final dataset for our experiments comprised \num{258688} patients from \num{387} practices covering the period 2004-2017. Each practice was treated as an independent site for our FL setup. The median follow-up period was 8.9 years, during which CVD events were identified through combined CPRD, HES, and ONS records using the hierarchical clinical coding system employed in UK primary care (Read codes~\cite{herrett_data_2015}) and ICD-10 classifications, consistent with the QRISK~\cite{hippisley-cox_development_2017} algorithm definition~\cite{xu_prediction_2021}.\\

\noindent
\textbf{INTERVAL.} We used data from the INTERVAL~\cite{angelantonio_efficiency_2017} (ISRCTN24760606) randomised controlled trial. The analysis employed full blood count measurements obtained from Sysmex haematology analysers, following the preprocessing pipeline by Kreuter et al.~\cite{kreuter_artificial_2025} Like in their study, the binary classification outcome was iron deficiency (serum ferritin concentration $<\SI{15}{\micro\gram\per\litre}$) detection using only full blood count data. Preprocessing consisted of exclusion of unreliable measurements (suspected red cell turbidity, red cell agglutination, or platelet clumping), exclusion of samples where the delay between venepuncture and sample measurement exceeded 36 hours, and exclusion of subjects outside eligible height (\SIrange{0.5}{2.5}{\metre}) and weight (\SIrange{30}{190}{\kilo\gram}), resulting in \num{31949} subjects.\\

\noindent
We created partitions of the dataset to represent a diverse mix of blood donor populations, filtering to sites A (newly enrolled donors), B (non-european ancestry donors), C (European ancestry donors, older age distribution), and D (European ancestry donors, younger age distribution). The distributions within the partitions are illustrated in Fig.~\ref{fig:datasets}.\\

\noindent
New donors (regardless of other characteristics) were assigned to site A (\num{1662} participants, \SI{5.2}{\percent}). Non-European ancestry donors (based on self-identified ethnicity) were allocated to site B to represent centres serving ethnically diverse populations (\num{2024} participants, \SI{6.3}{\percent}). 
European ancestry donors, making up the majority of the dataset, were probabilistically assigned to age-stratified sites using a logistic function
\begin{align*}
    f(x,k,x_0) &= \frac{L}{1+e^{-k(x-x_0)}} \; ,
\end{align*}
to achieve a smooth distribution between age groups.
For site C (\num{15772} participants, \SI{49.4}{\percent}), donors were drawn with probability $p=f(\text{age},0.2,60)$, while for site D we used $p=1-f(\text{age},0.2,40)$ (\num{12491} participants, \SI{39.1}{\percent}). Donors not assigned by this strategy were discarded for our experiments. This partitioning strategy aims to reflect realistic demographic clustering patterns, where donor age and ethnicity distributions vary geographically due to local population demographics and recruitment patterns.\\

\noindent
\textbf{eICU Collaborative Research Database.} Lastly, we used the publicly available eICU Collaborative Research Database~\cite{pollard_eicu_2018}, which contains deidentified critical care data from 200,859 patient encounters across hospitals in the United States between 2014-2015. Following the preprocessing methodology of Elhussein and Gürsoy~\cite{pmlr-v219-elhussein23a}, we extracted patients with complete data across three feature domains: diagnosis codes, medications, and physiological markers recorded within the first 48 hours of ICU admission.\\

\noindent
In line with the studies we used as pre-processing guidelines, we report C-index for CPRD (Xu et al.~\cite{xu_prediction_2021}), balanced accuracy (reflecting both sensitivity and specificity equally) for INTERVAL (Kreuter et al.~\cite{kreuter_artificial_2025}), and area under the receiver operator characteristic curve (AUROC) for eICU (Elhussein and Gürsoy~\cite{pmlr-v219-elhussein23a}) as the metrics for our experiments. 

\subsubsection*{Feature skew, label skew, and quantity skew within the datasets}
Analysis revealed substantial data heterogeneity across sites for all datasets (Table~\ref{tab:dataset_characteristics}, Fig~\ref{fig:datasets}). Quantity skew manifested through highly variable sample sizes per site, with coefficients of variation ranging from 0.68 (CPRD) to 1.53 (eICU). Label distribution skew exhibited similar variability through heterogeneous positive class ratios, with coefficients of variation from 0.36 (CPRD) to 1.29 (eICU). Feature distribution skew was quantified using 1D-Wasserstein distances between sites, calculated on the most predictive feature as determined by an XGBoost~\cite{chen_xgboost_2016} classifier with default settings. The eICU dataset exhibited pronounced heterogeneity (\num{0.72\pm0.75}), indicating substantial differences in patient populations and clinical protocols.\\

\noindent
All datasets exhibited class imbalances characteristic of clinical diagnostic scenarios, with positive class ratios consistently below \SI{20}{\percent} (CPRD: \SI{14}{\percent}, INTERVAL: \SI{17}{\percent}, eICU: \SI{7}{\percent}). This configuration provides direct observation of clinical FL challenges, where multiple forms of statistical heterogeneity occur simultaneously rather than in isolation.

\begin{figure}[htbp]
    \centering
    \includegraphics[width=1\linewidth]{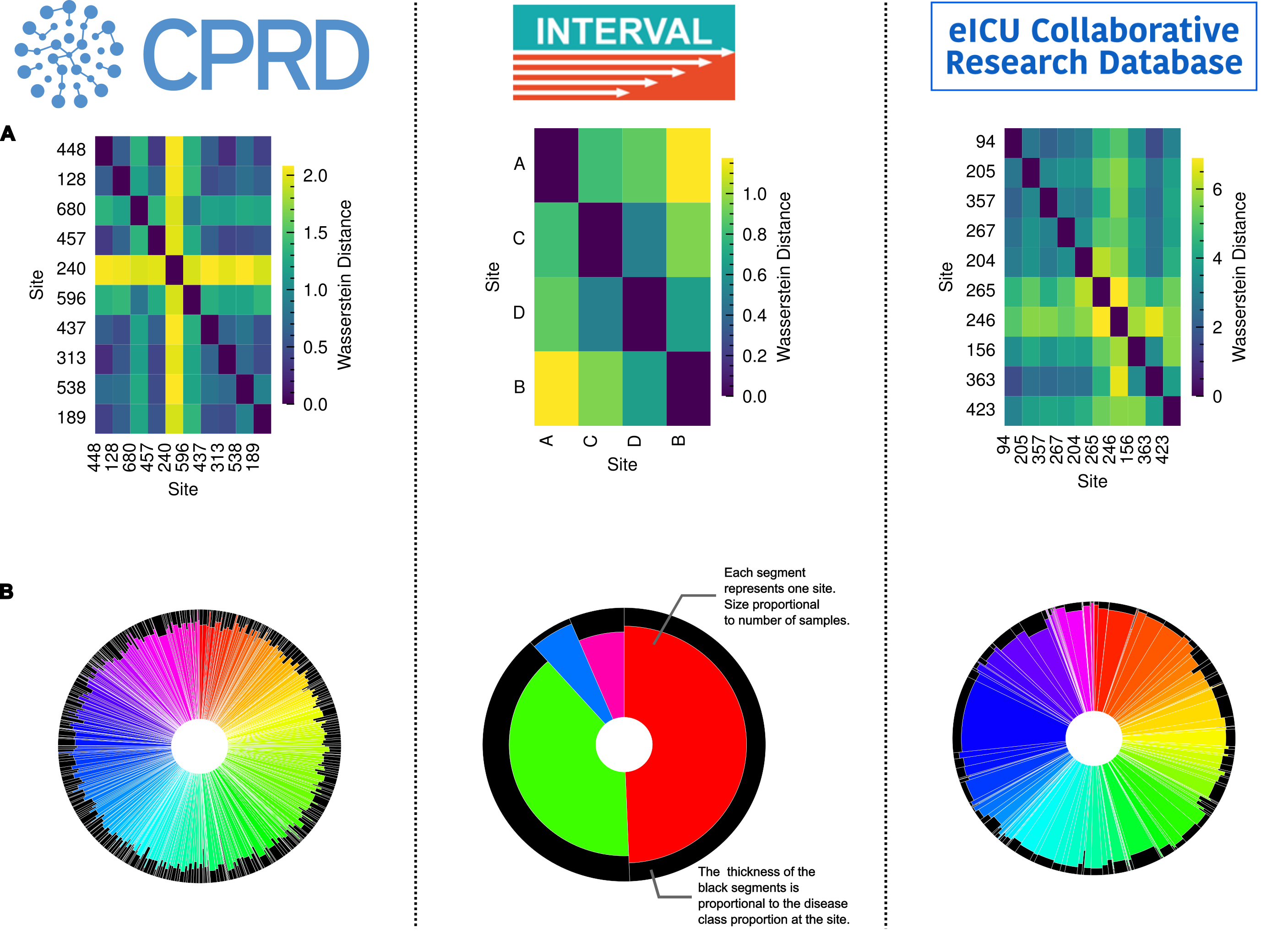}
    \caption{\textbf{Healthcare datasets.}
    We used data from the Clinical Practice Research Datalink (CPRD)~\cite{herrett_data_2015, xu_prediction_2021}, the INTERVAL trial~\cite{angelantonio_efficiency_2017}, and data from the eICU Collaborative Research Database~\cite{pollard_eicu_2018}.
    \textbf{a,} Heatmaps of Wasserstein distance between feature distributions of up to ten sites of each dataset (top ten sites with largest distance).
    \textbf{b,} Radial plot displaying sample and class distribution over the sites for the three datasets.
    }
    \label{fig:datasets}
\end{figure}

\begin{table}[htbp]
\caption{Dataset characteristics across our medical datasets}\label{tab:dataset_characteristics}%
\begin{tabular}{@{}llll@{}}
\toprule
 & \textbf{CPRD} & \textbf{INTERVAL} & \textbf{eICU}\\
\midrule
Prediction task & 10-year CVD risk & Iron deficiency & Mortality\\
$n_{\text{subjects}}$ & \num{258688} & \num{31949} & \num{44842}\\
$n_{\text{features}}$ & \num{7} & \num{16} & \num{2104}\\
$n_{\text{sites}}$ & \num{387} & \num{4} & \num{150}\\
$n_{\text{subjects}}$ per site & \num{799.2 \pm 540.6} & \num{7987.2 \pm 7221.6} & \num{298.9 \pm 456.7}\\
Positive class ratio per site & \num{0.14 \pm 0.05} & \num{0.17 \pm 0.08} & \num{0.07 \pm 0.09}\\
Wasserstein distance between sites & \num{0.40 \pm 0.31} & \num{0.38 \pm 0.25} & \num{0.72 \pm 0.75}\\
\bottomrule
\end{tabular}
\footnotetext{Numbers represent mean $\pm$ standard deviation across sites. CVD, cardiovascular disease.}
\end{table}

\clearpage
\subsubsection*{Experimental Setup}
\paragraph{Tier Grouping}
We designed a grouping strategy to simulate the three-tier deployment scenarios described in Figure~\ref{fig:fedmap}. Sites were assigned to tiers based on dataset size, which correlates with the computational resources and technical infrastructure typically available at healthcare sites of different scales. In CPRD, sites with $>2{,}000$ patients were assigned to T1 (185 practices, mean $\pm$ standard deviation: $3{,}276 \pm 1{,}192$ patients); sites with 500--2,000 patients to T2 (188 practices, $1{,}265 \pm 433$ patients); and sites with $<500$ patients to T3 (16 practices, $308 \pm 159$ patients). In eICU, sites with $>400$ patients were assigned to T1 (51 hospitals, $689 \pm 518$ patients); sites with 100--400 patients to T2 (25 hospitals, $168 \pm 60$ patients); and sites with $<100$ patients to T3 (74 hospitals, $48 \pm 53$ patients). For INTERVAL dataset, T1 included three sites with 15,772, 12,491, and 2,024 donors. T2 represented a single site with 1,361 donors. T3 simulated a small centre with 301 donors.

\paragraph{Train Test Split Strategy}
To evaluate local performance under realistic settings, data at each location were divided into separate training and validation sets, with no sharing between sites. For CPRD, each practice's data were randomly partitioned at the patient level into 80\% training and 20\% validation sets, with all landmark age observations for a given patient assigned to the same partition to prevent data leakage. In eICU, we partitioned each hospital's data into 80\% training and 20\% validation sets using stratified random sampling to maintain the balance of mortality outcomes. For INTERVAL, the three T1 sites and single T2 site were each randomly partitioned into 80\% training and 20\% validation sets following the demographic stratification defined from Kreuter et al.~\cite{kreuter_artificial_2025}. The T3 site (\num{301} donors) was used exclusively for inference-only evaluation.

\paragraph{Baseline Methods}
We compared FedMAP against established FL and PFL approaches representing different algorithmic paradigms: FedAvg~\cite{mcmahan_communication-efficient_2016}, the reference FL method aggregating model updates through weighted averaging; pFedME~\cite{t2020personalized}, a regularisation-based framework employing Moreau envelopes for personalised bi-level optimisation; pFedBayes~\cite{pmlr-v162-zhang22o}, which utilizes variational inference to learn personalised posterior distributions; pFedGP~\cite{achituve2021personalized}, modelling inter site relationships through Gaussian process priors; and pFedFDA~\cite{mclaughlin2024personalized}, addressing feature distribution skew via federated domain adaptation. Also, we evaluated individual (non-federated) training, where each site trained independently on local data only, to quantify FL performance gains.

\paragraph{Implementation Details}
FedMAP was implemented in  \texttt{PyTorch} v2.6.0 and integrated into the Flower FL framework~\cite{beutel2020flower} (v1.15) as a custom aggregation strategy. All models were trained using the Adam optimiser with a learning rate of 0.001, early stopping with a patience of 5 epochs, and dropout regularisation. For the cardiovascular disease risk prediction task using CPRD data, we used the DeepSurv~\cite{katzman_deepsurv_2018}-based Cox proportional hazards model described in Appendix Table~\ref{tab:model-architecture-cprd}. Training was performed with a batch size of 128, dropout rate of 0.1, 10 local epochs per round, and 100 communication rounds. For the mortality prediction task on eICU data, we used the multimodal feedforward neural network described in Appendix Table~\ref{tab:model-architecture-eicu}. The model was trained with a batch size of 32, dropout rate of 0.3, weight decay of $10^{-5}$, 5 local epochs per round, and 30 communication rounds. For the iron deficiency detection task on INTERVAL data, we used the Multi-layer Perceptron (MLP) classifier described in Appendix Table~\ref{tab:model-architecture-interval-expanded}. The model was trained with cross-entropy loss, batch size 128, weight decay $10^{-5}$, 2 local epochs per round, and 50 communication rounds.  Source code is publicly available at \url{https://github.com/CambridgeCIA/FedMAP}. \\

\subsubsection*{Computational Resources}
All experiments were conducted on a high performance computing cluster running Rocky Linux 8.10 (Green Obsidian). The system featured two Intel Xeon (Sapphire Rapids) processors with a total of 188 logical CPUs operating at 2.1 GHz frequency, supported by 1.9 TiB of RAM.


\clearpage
\backmatter
\section*{Declarations}

\bmhead{Funding}
F.Z, D.K, C.-B.S, N.S.G., M.R. have received support from the Trinity Challenge grant awarded to establish the BloodCounts! consortium, along with NIHR UCLH Biomedical Research Centre, the NIHR Cambridge Biomedical Research Centre, National Health Service Blood and Transplant (NHSBT) and the Medical Research Council. S.I. was supported by Cancer Research UK EDDPMA-May22\textbackslash{}100062 and by Cancer Research UK (EDDAPA-2024/100011). A.W is supported by the BHF Data Science Centre (HDRUK2023.0239), Health Data Research UK (Big Data for Complex Disease-HDR-23012), and as an NIHR Research Professor (NIHR303137). S.I and A.M.W were supported by core funding from the British Heart Foundation (RG/F/23/110103), NIHR Cambridge Biomedical Research Centre (NIHR203312)\footnote{The views expressed are those of the authors and not necessarily those of the NIHR or the Department of Health and Social Care.}. BHF Chair Award (CH/12/2/29428), and by Health Data Research UK, which is funded by the UK Medical Research Council, Engineering and Physical Sciences Research Council, Economic and Social Research Council, Department of Health and Social Care (England), Chief Scientist Office of the Scottish Government Health and Social Care Directorates, Health and Social Care Research and Development Division (Welsh Government), Public Health Agency (Northern Ireland), British Heart Foundation and the Wellcome Trust. C.E.-Y. is supported by the Ramón y Cajal 2022 grant RYC2022-035966-I. J.H.F.R. is part-supported by the NIHR Cambridge Biomedical Research Centre, the British Heart Foundation Centre of Research Excellence (RE/24/130011) and the Wellcome Trust. N.S.G. has been supported by NHSBT grants 1701-GEN; 20-01-GEN; G120400. C.-B.S. acknowledges support from the EPSRC programme grant in 'The Mathematics of Deep Learning' (EP/L015684), Cantab Capital Institute for the Mathematics of Information, the Philip Leverhulme Prize, the Royal Society Wolfson Fellowship, the EPSRC grants EP/S026045/1 and EP/T003553/1, EP/N014588/1, EP/T017961/1, the Wellcome Innovator Award RG98755 and the Alan Turing Institute. M.R. is additionally supported by the British Heart Foundation (TA/F/20/210001).

\bmhead{Competing interests}
All authors declare that they have no conflicts of interest.

\bmhead{Ethics approval and consent to participate}
The INTERVAL trial is compiled into the Blood Donors Studies BioResource (BDSB) with Research Ethics Committee (REC) reference 20/EE/0115.
The Clinical Practice Research Datalink (CPRD) is approved by the UK Health Research Authority for observational research. The data comprise anonymised patient records collected by the NHS as part of routine clinical care. Individual patient consent is not required for studies using CPRD data under the terms of the CPRD Research Data Governance Process, which operates within UK legislation and guidance on the use of patient data for research. The study protocol was approved by the Independent Scientific Advisory Committee for MHRA Database Research.
The eICU study is exempt from institutional review board approval due to the retrospective design, lack of direct patient intervention, and the security schema, for which the re-identification risk was certified as meeting safe harbor standards by an independent privacy expert (Privacert, Cambridge, MA, USA) (Health Insurance Portability and Accountability Act Certification no. 1031219-2).

\bmhead{Consent for publication}
All authors have reviewed and approved the submission.

\bmhead{Data availability}
Access to BDSB data can be applied for from the Blood Donors Studies Data Access Committee.
The CPRD data is under licence from the UK Medicines and Healthcare Products Regulatory Agency. CPRD data are available to researchers through application to CPRD (\url{https://www.cprd.com/access-data}). Due to data governance restrictions, the data cannot be made publicly available. Researchers wishing to access similar data should apply directly to CPRD.
The eICU database is publicly available to researchers who complete the required training and credentialing process. Access to the database can be requested through PhysioNet at \url{https://eicu-crd.mit.edu/}. Researchers must complete the CITI "Data or Specimens Only Research" training course and sign a data use agreement to gain access. Further information about the database and access procedures can be found at \url{https://eicu-crd.mit.edu/gettingstarted/access/}.


\bmhead{Code availablity}
Source code is publicly available at \url{https://github.com/CambridgeCIA/FedMAP}.

\bmhead{Author contributions}
F.Z., C.E.-Y., and S.D. conceptualised the study and designed the methodology. F.Z. and D.K. performed the experiments and analysed the results. F.Z., D.K., and C.E.-Y. drafted the manuscript. M.R. provided supervision.
\backmatter


\bmhead{Acknowledgements}
Participants in the INTERVAL randomised controlled trial were recruited with the active collaboration of NHS Blood and Transplant England (\url{https://www.nhsbt.nhs.uk}), which has supported field work and other elements of the trial. DNA extraction and genotyping were co-funded by the National Institute for Health and Care Research (NIHR), the NIHR BioResource (\url{http://bioresource.nihr.ac.uk}) and the NIHR Cambridge Biomedical Research Centre (BRC-1215-20014)\footnote{The views expressed are those of the authors and not necessarily those of the NIHR or the Department of Health and Social Care.}.

\medskip

\noindent The academic coordinating centre for INTERVAL was supported by core funding from the: NIHR Blood and Transplant Research Unit (BTRU) in Donor Health and Genomics (NIHR BTRU-2014-10024), NIHR BTRU in Donor Health and Behaviour (NIHR203337), UK Medical Research Council (MR/L003120/1), British Heart Foundation (SP/09/002; RG/13/13/30194; RG/18/13/33946), NIHR Cambridge BRC (BRC-1215-20014; NIHR203312), and by Health Data Research UK, which is funded by the UK Medical Research Council, Engineering and Physical Sciences Research Council, Economic and Social Research Council, Department of Health and Social Care (England), Chief Scientist Office of the Scottish Government Health and Social Care Directorates, Health and Social Care Research and Development Division (Welsh Government), Public Health Agency (Northern Ireland), British Heart Foundation and Wellcome.

\medskip

\noindent A complete list of the investigators and contributors to the INTERVAL trial is provided in reference~\citep{angelantonio_efficiency_2017}. The academic coordinating centre would like to thank blood donor centre staff and blood donors for participating in the INTERVAL trial.

\bmhead{BloodCounts! Consortium}
Martijn Schut$^{17}$, Folkert Asselbergs$^{17}$, Sujoy Kar$^{18}$, Suthesh Sivapalaratnam$^{19}$, Sophie Williams$^{19}$, Mickey Koh$^{20}$, Yvonne Henskens$^{21}$, Norbert C.J. de Wit$^{21}$, Umberto D'Alessandro$^{22}$, Bubacarr Bah$^{22}$, Ousman Secka$^{22}$, Parashkev Nachev$^{23}$, Rajeev Gupta$^{23}$, Sara Trompeter$^{23}$, Nancy Boeckx$^{24}$, Christine van Laer$^{24}$, Gordon. A. Awandare$^{25}$, Kwabena Sarpong$^{25}$, Lucas Amenga-Etego$^{25}$, Mathie Leers$^{26}$, Mirelle Huijskens$^{26}$, Samuel McDermott$^{1}$, Willem H. Ouwehand$^{8}$, James Rudd$^{9}$, Carola-Bibiane Schönlieb$^{1}$, Nicholas Gleadall$^{3,7,8}$ and Michael Roberts$^{1,9}$.\\

\noindent
${}^{17}$ Amsterdam University Medical Centre, Amsterdam, Netherlands.
${}^{18}$ Apollo Hospitals, Chennai, India.
${}^{19}$ Barts Health NHS Trust, London, United Kingdom.
${}^{20}$ Health Services Authority, Singapore.
${}^{21}$ Maastricht University Medical Centre, Maastricht, Netherlands.
${}^{22}$ MRC The Gambia Unit, Banjul, The Gambia.
${}^{23}$ University College London Hospitals, London, United Kingdom.
${}^{24}$ University Hospitals Leuven, Leuven, Belgium.
${}^{25}$ West African Centre for Cell Biology of Infectious Pathogens, Accra, Ghana.
${}^{26}$ Zuyderland Medical Center, Zuyderland, Netherlands.
\clearpage
\bibliography{references}

\clearpage
\begin{appendices}

\section{Proof of Theorem \ref{thm:main paper}}\label{secA1}

In this section we provide the proof of Theorem \ref{thm:main paper}. 
Let us recall the convexity assumption made in Theorem \ref{thm:main paper}, concerning the parameter spaces $\Theta$ and $\Gamma$, the loss functional $\mathcal{L} (\theta; Z_k)$, and  the parametric regulariser $\mathcal{R} (\theta; \gamma)$.
\begin{assumption}
\label{assum: convexity}
Throughout this section, we assume:
\begin{enumerate}
    \item The parameter space $\Theta \subset \mathbb{R}^d$ is compact and convex.
    \item The parameter space for the regulariser $\Gamma \subset \mathbb{R}^p$ is convex.
    \item For each $k = 1, \ldots , q$, the function $\theta \mapsto \mathcal{L} (\theta; Z_k)$ is continuous and convex in $\Theta$.
    \item The function $(\theta, \gamma) \mapsto \mathcal{R} (\theta; \gamma)$ is differentiable and strongly convex in $\Theta \times \Gamma$, i.e., there exists $\alpha_1 > 0$ such that for all $(\theta_1, \gamma_1), (\theta_2, \gamma_2)\in \Theta \times \Gamma$:
    \begin{equation}
    \label{strong convexity}
    \dfrac{1}{2} \mathcal{R} (\theta_1; \gamma_1) + \dfrac{1}{2} \mathcal{R} (\theta_2; \gamma_2) \geq \mathcal{R} \left( \dfrac{\theta_1 + \theta_2}{2}; \dfrac{\gamma_1+ \gamma_2}{2}  \right) + \alpha_1 \left( \left\| \dfrac{\theta_1 - \theta_2}{2}  \right\|^2 + \left\| \dfrac{\gamma_1 - \gamma_2}{2}  \right\|^2 \right).
    \end{equation}
\end{enumerate}
\end{assumption}

\noindent
Assumption \ref{assum: convexity} implies that, for any $\gamma\in \Gamma$, the minimiser in the right-hand-side of \eqref{MAP estimation} is attained at a unique $\theta^\ast\in \Theta$, which obviously depends on $\gamma$.

\noindent
We start the proof of Theorem \ref{thm:main paper} by proving that the function $M(\gamma)$ defined in \eqref{M of gamma def} is strongly convex in $\Gamma$. This corresponds to the second point in Theorem \ref{thm:main paper}, and guarantees that with a sufficiently small learning rate $\lambda$, the FedMAP iterates for $\gamma^{(t)}$ converge to the unique minimiser of $M(\gamma)$.

\begin{lemma}
\label{lem:M_convex}
Under the hypotheses in Assumption \ref{assum: convexity}, the function $M(\gamma)$ defined in \eqref{M of gamma def} is strongly convex in $\Gamma$. That is, there exists $\alpha_M > 0$ such that
$$
\dfrac{1}{2} M(\gamma_1) + \dfrac{1}{2} M(\gamma_2) \geq M \left( \dfrac{\gamma_1 + \gamma_2}{2} \right) + \alpha_M 
\left\| \dfrac{\gamma_1 - \gamma_2}{2} \right\|^2,
\qquad
\forall\gamma_1,\gamma_2 \in \Gamma.
$$
\end{lemma}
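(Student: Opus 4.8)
The plan is to recognise each $M_k(\cdot; Z_k)$ as a partial minimisation over $\theta$ of the function $(\theta, \gamma) \mapsto \mathcal{L}(\theta; Z_k) + N_k \mathcal{R}(\theta; \gamma)$, which is jointly convex in $(\theta, \gamma)$ and strongly convex in the $\gamma$-direction with modulus $N_k \alpha_1$ inherited from \eqref{strong convexity}. Since minimising a jointly convex function over one block of variables yields a convex function of the remaining block, and since here the strong convexity lives (partly) in the block we keep, the modulus should survive intact; summing over $k$ then gives the result with $\alpha_M = \alpha_1 \sum_{k=1}^q N_k$.

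First I would record that for every $\gamma \in \Gamma$ the infimum defining $M_k(\gamma; Z_k)$ in \eqref{M of gamma def} is attained, since $\Theta$ is compact and $\theta \mapsto \mathcal{L}(\theta; Z_k) + N_k \mathcal{R}(\theta; \gamma)$ is continuous (continuity of $\mathcal{L}$ by hypothesis, of $\mathcal{R}$ from differentiability); as already observed after Assumption~\ref{assum: convexity} the minimiser is in fact unique, and I will write it $\theta_k^\ast(\gamma)$, so in particular $M_k$ is finite-valued on $\Gamma$. Then, fixing $\gamma_1, \gamma_2 \in \Gamma$ and setting $\theta_i := \theta_k^\ast(\gamma_i)$, $\bar\theta := (\theta_1 + \theta_2)/2$, $\bar\gamma := (\gamma_1 + \gamma_2)/2$ (all legitimate by convexity of $\Theta$ and $\Gamma$), I would use $\bar\theta$ as a trial point in the minimisation defining $M_k(\bar\gamma; Z_k)$, which gives $M_k(\bar\gamma; Z_k) \le \mathcal{L}(\bar\theta; Z_k) + N_k \mathcal{R}(\bar\theta; \bar\gamma)$.

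The next step is to bound this right-hand side: convexity of $\mathcal{L}(\cdot; Z_k)$ gives $\mathcal{L}(\bar\theta; Z_k) \le \tfrac12 \mathcal{L}(\theta_1; Z_k) + \tfrac12 \mathcal{L}(\theta_2; Z_k)$, and applying \eqref{strong convexity} to the pairs $(\theta_1, \gamma_1)$, $(\theta_2, \gamma_2)$ gives
$$\mathcal{R}(\bar\theta; \bar\gamma) \le \tfrac12 \mathcal{R}(\theta_1; \gamma_1) + \tfrac12 \mathcal{R}(\theta_2; \gamma_2) - \alpha_1 \left\| \tfrac{\theta_1 - \theta_2}{2} \right\|^2 - \alpha_1 \left\| \tfrac{\gamma_1 - \gamma_2}{2} \right\|^2.$$
Substituting and regrouping, the combinations $\mathcal{L}(\theta_i; Z_k) + N_k \mathcal{R}(\theta_i; \gamma_i)$ that appear are precisely $M_k(\gamma_i; Z_k)$ because $\theta_i$ achieves the minimum at $\gamma_i$; discarding the nonnegative term $N_k \alpha_1 \|(\theta_1 - \theta_2)/2\|^2$ leaves
$$M_k(\bar\gamma; Z_k) \le \tfrac12 M_k(\gamma_1; Z_k) + \tfrac12 M_k(\gamma_2; Z_k) - N_k \alpha_1 \left\| \tfrac{\gamma_1 - \gamma_2}{2} \right\|^2,$$
and summing over $k = 1, \ldots, q$ yields the stated inequality for $M$ with $\alpha_M := \alpha_1 \sum_{k=1}^q N_k > 0$.

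I expect no serious obstacle; the one point needing care is to invoke \eqref{strong convexity} with the \emph{optimal} pairs $(\theta_i, \gamma_i)$ rather than with $\theta$ held fixed, so that the bracketed quantities collapse exactly to $M_k(\gamma_i; Z_k)$ while the full $\gamma$-modulus $\alpha_1$ is retained — a less careful pairing would either fail to reconstruct $M_k(\gamma_i; Z_k)$ or dilute the constant. It is also worth noting that we use only convexity (not strong convexity) of $\mathcal{L}$, and that strong convexity of $\mathcal{R}$ in the $\theta$-variable is not needed here, merely that the discarded $\theta$-term is nonnegative.
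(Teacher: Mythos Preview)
Your proof is correct and follows essentially the same route as the paper's: use the optimal $\theta_k^\ast(\gamma_i)$ at each endpoint, plug their midpoint as a trial point into $M_k(\bar\gamma; Z_k)$, apply convexity of $\mathcal{L}$ and strong convexity \eqref{strong convexity} of $\mathcal{R}$, drop the nonnegative $\theta$-term, and sum over $k$. The only difference is the resulting constant: you correctly carry the factor $N_k$ through to obtain $\alpha_M = \alpha_1 \sum_{k} N_k$, whereas the paper (harmlessly, since any positive modulus suffices) drops the $N_k$ and records $\alpha_M = q\alpha_1$.
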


\begin{proof}
    Let $\gamma_1, \gamma_2 \in \Gamma$. For each $k=1, \ldots q$, let $\theta_{k,i}^\ast = \theta_k^\ast(\gamma_i)$ for $i=1,2$ be the unique minimisers of $\theta \mapsto \mathcal{L}(\theta; Z_k) + \mathcal{R}(\theta; \gamma_i)$.
    By definition of $M_k(\gamma; Z_k)$ in \eqref{M of gamma def}, we have:
    $$
    M_k (\gamma_i ; Z_k) = \mathcal{L} (\theta_{k,i}^\ast; Z_k) + N_k\mathcal{R} (\theta_{k,i}^\ast; \gamma_i), \qquad \text{for} \ i = 1,2.
    $$

    \noindent
    Now, consider $M_k \left( \frac{\gamma_1 + \gamma_2}{2} ; Z_k \right)$. By the convexity of $\mathcal{L}$ and $\mathcal{R}$ in Assumption \ref{assum: convexity}, we have
    \begin{align*}
         M_k \left( \dfrac{\gamma_1+ \gamma_2}{2}; Z_k  \right) & \leq \mathcal{L} \left( \dfrac{\theta_{k,1}^\ast+ \theta_{k,2}^\ast}{2} ; Z_k  \right)  + N_k \mathcal{R} \left( \dfrac{\theta_{k,1}^\ast + \theta_{k,2}^\ast}{2} ;\dfrac{\gamma_1+ \gamma_2}{2}  \right) \\
         & \leq \dfrac{1}{2} \mathcal{L} (\theta_{k,1}^\ast; Z_k) + \dfrac{1}{2} \mathcal{L} (\theta_{k,2}^\ast; Z_k) + \dfrac{N_k}{2} \mathcal{R} (\theta_{k,1}^\ast; \gamma_1) + \dfrac{N_k}{2} \mathcal{R} (\theta_{k,2}^\ast; \gamma_2) \\
         & \quad - \alpha_1 \left( \left\| \dfrac{ \theta_{k,1}^\ast - \theta_{k,2}^\ast}{2} \right\|^2 +  \left\| \dfrac{ \gamma_1 - \gamma_2}{2} \right\|^2  \right).
    \end{align*}
    Re-arranging the terms we obtain
    $$
    \dfrac{1}{2} M_k (\gamma_1; Z_k) + \dfrac{1}{2} M_k (\gamma_2; Z_k) \geq M_k \left( \dfrac{\gamma_1+ \gamma_2}{2} ; Z_k  \right) + \alpha_1 \left\| \dfrac{\gamma_1 - \gamma_2}{2} \right\|^2 ,
    $$
    and summing over $k=1,\ldots, q$, we obtain
    $$
    \dfrac{1}{2} M(\gamma_1) + \dfrac{1}{2} M(\gamma_2) \geq M \left( \dfrac{\gamma_1 + \gamma_2}{2} \right) + q\alpha_1 \left\| \dfrac{\gamma_1 - \gamma_2}{2} \right\|^2.
    $$
    Finally, we take $\alpha_M = q\alpha_1$ and obtain that
    $$
    \dfrac{1}{2} M(\gamma_1) + \dfrac{1}{2} M(\gamma_2) \geq M \left( \dfrac{\gamma_1 + \gamma_2}{2} \right) + \alpha_M \left\|  \dfrac{\gamma_1 - \gamma_2}{2} \right\|^2,
    $$
    thus concluding the proof of strong convexity for $M(\gamma)$.
\end{proof}

\begin{proof}[Proof of Theorem \ref{thm:main paper}]
    Due to Assumption \ref{assum: convexity}, for each $k\in \{1, \ldots, q\}$ and for any $\gamma\in \Gamma$,
    the function $\theta \mapsto \mathcal{L} (\theta; Z_k) + \mathcal{R} (\theta;\gamma)$ is strongly convex in $\Theta$, and therefore, there exists a unique minimiser $\theta_k^\ast\in \Theta$, which gives $M_k (\gamma ; Z_k) =  \mathcal{L} (\theta_k^\ast; Z_k) + \mathcal{R} (\theta_k^\ast;\gamma)$.\\

    \noindent
    Using the definition of $M_k (\gamma ; Z_k)$, we can apply Danskin's Theorem \cite{danskin_theory_1966,bernhard_theorem_1995} to obtain
    $$
    \nabla_\gamma M_k (\gamma; Z_k) = \nabla_\gamma \left[ \mathcal{L} (\theta_k^\ast; Z_k) + N_k \mathcal{R} (\theta_k^\ast; \gamma) \right],
    $$
    where $\theta_k^\ast$ is the unique minimiser of $\theta \mapsto \mathcal{L} (\theta; Z_k) + N_k \mathcal{R} (\theta; \gamma)$.\\

    \noindent
    Since $\mathcal{L}(\theta; Z_k)$ is independent of $\gamma$, the gradient of $M_k (\gamma ; Z_k)$ with respect to $\gamma$ is given by
    $$
    \nabla_\gamma M_k (\gamma; Z_k) = N_k \nabla_\gamma\mathcal{R} (\theta_k^\ast; \gamma).
    $$

    \noindent
    Hence, the gradient of the function $M(\gamma)$ in \eqref{M of gamma def} is given by
    $$
    \nabla_\gamma M (\gamma) = \sum_{k=1}^q N_k \nabla_\gamma \mathcal{R} (\theta_k^\ast; \gamma),
    $$
    where for each $k=1, \ldots, q$, $\theta_k^\ast$ is the unique minimiser of $\theta\mapsto \mathcal{L} (\theta; Z_k) + N_k \mathcal{R} (\theta; \gamma)$.
    
    Therefore, the update rule \eqref{prior MLE} can be written as
    $$
    \gamma^{(t+1)} = \gamma^{(t)} - \lambda \sum_{k=1}^q N_k \nabla_\gamma \mathcal{R} (\theta_k^{(t)}; \gamma^{(t)}) = \gamma^{(t)} - \lambda \nabla_\gamma M(\gamma^{(t)}),
    $$
    where for each $k=1, \ldots, q$, $\theta_k^{(t)}$ is the unique minimiser of $\theta\mapsto \mathcal{L} (\theta; Z_k) + N_k \mathcal{R} (\theta; \gamma^{(t)})$.
    
    \noindent
    This proves that the FedMAP iterations (Equations \eqref{MAP estimation} and \eqref{prior MLE}) effectively correspond to gradient descent iterations for $M(\gamma)$.
    The strong convexity of $M(\gamma)$ has been proved in Lemma \ref{lem:M_convex}. This strong convexity ensures that $M(\gamma)$ has a unique minimiser $\gamma^\ast$.\\

    \noindent
    Now, let us prove that the minimiser $\gamma^\ast$ of $M(\gamma)$, together with $\theta_k^\ast := \theta_k^{\ast}(\gamma^\ast)$ defined in Theorem \ref{thm:main paper}, constitutes the unique solution to the bi-level optimisation problem \eqref{bi-level optimisation}.
    By the compactness of $\Theta$ and the strong convexity of $\mathcal{R} (\theta; \gamma)$ with respect to $(\theta, \gamma)$, the overall objective in \eqref{bi-level optimisation} is bounded from below. This guarantees the existence of a minimising sequence $\{ (\theta_1^{(n)}, \ldots, \theta_q^{(n)}, \gamma^{(n)}) \}_{n\in \mathbb{N}}$. Due to the compactness of $\Theta$, the sequence  $\{ (\theta_1^{(n)}, \ldots, \theta_q^{(n)}) \}_{n\in \mathbb{N}}$ converges (through a subsequence) to some $(\theta_1^\ast, \ldots , \theta_q^\ast)\in \Theta^q$. Also, since $\mathcal{R}$ is strongly convex, and therefore coercive, the sequence $\{  \gamma^{(n)} \}_{n\in \mathbb{N}}$ is bounded, which implies that it converges (through a subsequence) to some $\gamma^\ast\in \Gamma$. By continuity of the functions, this limit point $(\theta_1^\ast, \ldots , \theta_q^\ast, \gamma^\ast)\in \Theta^q\times \Gamma$ is a solution to the bi-level optimisation problem \eqref{bi-level optimisation}.\\

    \noindent
    To prove uniqueness, let $(\theta_1^\ast, \ldots , \theta_q^\ast, \gamma^\ast)$ be any solution to \eqref{bi-level optimisation}.
    By the first-order optimality condition for the inner minimisation problem in \eqref{bi-level optimisation}, we have:
    $$
    \sum_{k=1}^qN_k \nabla_\gamma \mathcal{R} (\theta_k^\ast; \gamma^\ast) = 0,
    $$
    Since $\theta_k^\ast$ is the unique minimiser of $\theta \mapsto \mathcal{L} (\theta; Z_k) + N_k \mathcal{R} (\theta; \gamma^\ast)$, it implies that $\gamma^\ast$ is a critical point of $M(\gamma)$. As $M(\gamma)$ is strongly convex, it has a unique critical point, which must be its unique global minimiser. Therefore, any solution to the bi-level optimisation problem corresponds to the unique minimiser of $M(\gamma)$, establishing the uniqueness of the solution to \eqref{bi-level optimisation}.
\end{proof}

\section{Model Architectures}

The neural network architectures for each clinical prediction task were designed following established methodological frameworks from the literature. For CPRD cardiovascular risk prediction, we adopted the DeepSurv architecture~\cite{katzman_deepsurv_2018}, a Cox proportional hazards neural network that has demonstrated strong performance on survival analysis tasks. For eICU mortality prediction, we implemented the multimodal architecture from Elhussein and G\"ursoy~\cite{pmlr-v219-elhussein23a}, which processes medication, diagnosis, and physiological features through separate encoding pathways before fusion. For INTERVAL iron deficiency detection, we used the MLP architecture from Kreuter et al.~\cite{kreuter_artificial_2025}. These architectures were selected to maintain consistency with prior work on these datasets, which allows direct comparison of FL methods and controlling for model design choices. \\

\noindent
The following tables describe the neural network architectures used for each of the clinical prediction tasks.

\begin{table}[h]
    \caption{MLP architecture for 10-year cardiovascular disease risk prediction with the CPRD dataset.}
    \label{tab:model-architecture-cprd}
    \centering
    \begin{tabular}{|c|p{10cm}|}
        \hline
        \rowcolor{gray!25} \textbf{Layer} & \textbf{Details} \\
        \hline
        1 & \texttt{Linear(in\_features, 64)}, ReLU, \texttt{BatchNorm1d(64)}, \texttt{Dropout(p=0.1)} \\
        \hline
        2 & \texttt{Linear(64, 32)}, ReLU, \texttt{BatchNorm1d(32)}, \texttt{Dropout(p=0.1)} \\
        \hline
        3 & \texttt{Linear(32, 16)}, ReLU, \texttt{BatchNorm1d(16)}, \texttt{Dropout(p=0.1)} \\
        \hline
        4 & \texttt{Linear(16, 1)} \\
        \hline
        5 & Output Layer (Log Hazard Ratio) \\
        \hline
    \end{tabular}
    \vspace{1em}
\end{table}

\begin{table}[h]
    \caption{Multi-modal MLP architecture for ICU mortality prediction with the eICU dataset.}
    \label{tab:model-architecture-eicu}
    \centering
    \begin{tabular}{|c|p{10cm}|}
        \hline
        \rowcolor{gray!25} \textbf{Layer} & \textbf{Details} \\
        \hline
        \multicolumn{2}{|l|}{\textbf{Medication Feature Extractor}} \\
        \hline
        1 & \texttt{Linear(input\_dim\_drugs, 100)}, ReLU \\
        \hline
        2 & \texttt{Linear(100, 50)}, ReLU \\
        \hline
        3 & \texttt{Linear(50, 10)}, ReLU \\
        \hline
        4 & \texttt{Linear(10, 5)} \\
        \hline
        \multicolumn{2}{|l|}{\textbf{Diagnosis Feature Extractor}} \\
        \hline
        5 & \texttt{Linear(input\_dim\_dx, 100)}, ReLU \\
        \hline
        6 & \texttt{Linear(100, 50)}, ReLU \\
        \hline
        7 & \texttt{Linear(50, 10)}, ReLU \\
        \hline
        8 & \texttt{Linear(10, 5)} \\
        \hline
        \multicolumn{2}{|l|}{\textbf{Physiology Feature Extractor}} \\
        \hline
        9 & \texttt{Linear(input\_dim\_physio, 40)}, ReLU \\
        \hline
        10 & \texttt{Linear(40, 20)}, ReLU \\
        \hline
        11 & \texttt{Linear(20, 10)}, ReLU \\
        \hline
        12 & \texttt{Linear(10, 5)} \\
        \hline
        \multicolumn{2}{|l|}{\textbf{Concatenation and Final Prediction}} \\
        \hline
        13 & Concatenate(Medication, Diagnosis, and Physiology outputs) \\
        \hline
        14 & \texttt{Linear(15, 15)}, ReLU \\
        \hline
        15 & \texttt{Linear(15, 10)}, ReLU \\
        \hline
        16 & \texttt{Linear(10, 5)}, ReLU \\
        \hline
        17 & \texttt{Linear(5, 2)} \\
        \hline
        18 & Output Layer \\
        \hline
    \end{tabular}
    \vspace{1em}
\end{table}

\begin{table}[h]
    \caption{MLP architecture for iron deficiency prediction with the INTERVAL dataset (expanded view).}
    \label{tab:model-architecture-interval-expanded}
    \centering
    \begin{tabular}{|c|p{10cm}|}
        \hline
        \rowcolor{gray!25} \textbf{Layer} & \textbf{Details} \\
        \hline
        \multicolumn{2}{|l|}{\textbf{Hidden Block 1}} \\
        \hline
        1 & \texttt{Linear(input\_dim, 64)} \\
        \hline
        2 & ReLU \\
        \hline
        3 & \texttt{BatchNorm1d(64)} \\
        \hline
        4 & \texttt{Dropout(p=dropout)} \\
        \hline
        \multicolumn{2}{|l|}{\textbf{Hidden Block 2}} \\
        \hline
        5 & \texttt{Linear(64, 64)} \\
        \hline
        6 & ReLU \\
        \hline
        7 & \texttt{BatchNorm1d(64)} \\
        \hline
        8 & \texttt{Dropout(p=dropout)} \\
        \hline
        \multicolumn{2}{|l|}{\textbf{Output Layer}} \\
        \hline
        9 & \texttt{Linear(64, output\_dim)} \\
        \hline
    \end{tabular}
    \vspace{1em}
\end{table}

\clearpage
\section{Supplementary Results}
\subsection{Statistical comparisons of FedMAP against other methods in FL, fine-tuning, and inference experiments}
\begin{table}[hbt]
\caption{Statistical comparison of FedMAP against baseline methods on CPRD dataset (FL scenario)}\label{tab:comparison_cprd_fl}%
\begin{tabular}{@{}lrrrrrrr@{}}
\toprule
\textbf{Method} & \textbf{N Sites} & \textbf{FedMAP} & \textbf{Baseline} & \textbf{Median Diff.} & \textbf{95\% CI} & \textbf{p-value} & \textbf{Prop. Improved}\\
\midrule
FedAvg & \num{185} & \num{0.717} & \num{0.667} & +\num{0.050} & [\num{0.045}, \num{0.055}] & \num{<0.001} & \num{0.96}\\
pFedGP & \num{185} & \num{0.717} & \num{0.691} & +\num{0.025} & [\num{0.016}, \num{0.034}] & \num{<0.001} & \num{0.70}\\
pFedME & \num{185} & \num{0.717} & \num{0.696} & +\num{0.019} & [\num{0.015}, \num{0.022}] & \num{<0.001} & \num{0.82}\\
Individual & \num{185} & \num{0.717} & \num{0.690} & +\num{0.024} & [\num{0.020}, \num{0.028}] & \num{<0.001} & \num{0.79}\\
pFedBayes & \num{178} & \num{0.717} & \num{0.692} & +\num{0.023} & [\num{0.019}, \num{0.027}] & \num{<0.001} & \num{0.84}\\
pFedFDA & \num{185} & \num{0.717} & \num{0.502} & +\num{0.205} & [\num{0.179}, \num{0.228}] & \num{<0.001} & \num{0.99}\\
\bottomrule
\end{tabular}
\footnotetext{Median performance metrics across healthcare sites. CI, confidence interval; Prop., Proportion.}
\end{table}

\begin{table}[hbt]
\caption{Statistical comparison of FedMAP against baseline methods on CPRD dataset (fine-tuning scenario)}\label{tab:comparison_cprd_ft}%
\begin{tabular}{@{}lrrrrrrr@{}}
\toprule
\textbf{Method} & \textbf{N Sites} & \textbf{FedMAP} & \textbf{Baseline} & \textbf{Median Diff.} & \textbf{95\% CI} & \textbf{p-value} & \textbf{Prop. Improved}\\
\midrule
FedAvg & \num{188} & \num{0.708} & \num{0.663} & +\num{0.045} & [\num{0.033}, \num{0.054}] & \num{<0.001} & \num{0.82}\\
pFedGP & \num{188} & \num{0.708} & \num{0.692} & +\num{0.016} & [\num{0.003}, \num{0.030}] & \num{0.064} & \num{0.59}\\
pFedME & \num{188} & \num{0.708} & \num{0.663} & +\num{0.052} & [\num{0.043}, \num{0.062}] & \num{<0.001} & \num{0.83}\\
Individual & \num{138} & \num{0.710} & \num{0.660} & +\num{0.056} & [\num{0.032}, \num{0.072}] & N/A & \num{0.75}\\
pFedBayes & \num{188} & \num{0.708} & \num{0.681} & +\num{0.029} & [\num{0.022}, \num{0.037}] & \num{<0.001} & \num{0.73}\\
pFedFDA & \num{188} & \num{0.708} & \num{0.499} & +\num{0.192} & [\num{0.172}, \num{0.213}] & \num{<0.001} & \num{0.93}\\
\bottomrule
\end{tabular}
\footnotetext{Median performance metrics across healthcare sites. CI, confidence interval; Prop., Proportion.}
\end{table}

\begin{table}[hbt]
\caption{Statistical comparison of FedMAP against baseline methods on CPRD dataset (inference-only scenario)}\label{tab:comparison_cprd_inf}%
\begin{tabular}{@{}lrrrrrrr@{}}
\toprule
\textbf{Method} & \textbf{N Sites} & \textbf{FedMAP} & \textbf{Baseline} & \textbf{Median Diff.} & \textbf{95\% CI} & \textbf{p-value} & \textbf{Prop. Improved}\\
\midrule
FedAvg & \num{14} & \num{0.670} & \num{0.636} & +\num{0.048} & [\num{-0.018}, \num{0.105}] & \num{0.863} & \num{0.64}\\
pFedGP & \num{14} & \num{0.670} & \num{0.416} & +\num{0.209} & [\num{-0.013}, \num{0.352}] & \num{0.247} & \num{0.71}\\
pFedME & \num{14} & \num{0.670} & \num{0.667} & +\num{0.005} & [\num{-0.007}, \num{0.017}] & \num{1.000} & \num{0.64}\\
pFedBayes & \num{14} & \num{0.670} & \num{0.660} & +\num{0.001} & [\num{-0.003}, \num{0.016}] & \num{1.000} & \num{0.57}\\
pFedFDA & \num{14} & \num{0.670} & \num{0.484} & +\num{0.199} & [\num{0.072}, \num{0.260}] & \num{0.123} & \num{0.86}\\
\bottomrule
\end{tabular}
\footnotetext{Median performance metrics across healthcare sites. CI, confidence interval; Prop., Proportion.}
\end{table}

\begin{table}[hbt]
\caption{Statistical comparison of FedMAP against baseline methods on eICU dataset (FL scenario)}\label{tab:comparison_eICU_fl}%
\begin{tabular}{@{}lrrrrrrr@{}}
\toprule
\textbf{Method} & \textbf{N Sites} & \textbf{FedMAP} & \textbf{Baseline} & \textbf{Median Diff.} & \textbf{95\% CI} & \textbf{p-value} & \textbf{Prop. Improved}\\
\midrule
FedAvg & \num{51} & \num{0.808} & \num{0.787} & +\num{0.019} & [\num{0.001}, \num{0.043}] & \num{0.057} & \num{0.65}\\
pFedGP & \num{51} & \num{0.808} & \num{0.732} & +\num{0.053} & [\num{0.033}, \num{0.080}] & \num{<0.001} & \num{0.80}\\
pFedME & \num{51} & \num{0.808} & \num{0.772} & +\num{0.033} & [\num{0.010}, \num{0.055}] & \num{<0.001} & \num{0.71}\\
Individual & \num{51} & \num{0.808} & \num{0.701} & +\num{0.098} & [\num{0.066}, \num{0.137}] & \num{<0.001} & \num{0.86}\\
pFedBayes & \num{51} & \num{0.808} & \num{0.769} & +\num{0.049} & [\num{0.020}, \num{0.071}] & \num{<0.001} & \num{0.71}\\
pFedFDA & \num{51} & \num{0.808} & \num{0.738} & +\num{0.063} & [\num{0.029}, \num{0.085}] & \num{<0.001} & \num{0.80}\\
\bottomrule
\end{tabular}
\footnotetext{Median performance metrics across healthcare sites. CI, confidence interval; Prop., Proportion.}
\end{table}

\begin{table}[hbt]
\caption{Statistical comparison of FedMAP against baseline methods on eICU dataset (fine-tuning scenario)}\label{tab:comparison_eICU_ft}%
\begin{tabular}{@{}lrrrrrrr@{}}
\toprule
\textbf{Method} & \textbf{N Sites} & \textbf{FedMAP} & \textbf{Baseline} & \textbf{Median Diff.} & \textbf{95\% CI} & \textbf{p-value} & \textbf{Prop. Improved}\\
\midrule
FedAvg & \num{25} & \num{0.778} & \num{0.704} & +\num{0.077} & [\num{0.045}, \num{0.132}] & \num{0.001} & \num{0.92}\\
pFedGP & \num{25} & \num{0.778} & \num{0.725} & +\num{0.071} & [\num{0.037}, \num{0.121}] & \num{0.005} & \num{0.76}\\
pFedME & \num{25} & \num{0.778} & \num{0.725} & +\num{0.045} & [\num{0.020}, \num{0.069}] & \num{0.012} & \num{0.84}\\
Individual & \num{25} & \num{0.778} & \num{0.694} & +\num{0.049} & [\num{-0.013}, \num{0.098}] & \num{0.309} & \num{0.68}\\
pFedBayes & \num{25} & \num{0.778} & \num{0.753} & +\num{0.038} & [\num{0.018}, \num{0.066}] & \num{0.040} & \num{0.80}\\
pFedFDA & \num{25} & \num{0.778} & \num{0.705} & +\num{0.118} & [\num{-0.005}, \num{0.194}] & \num{0.015} & \num{0.68}\\
\bottomrule
\end{tabular}
\footnotetext{Median performance metrics across healthcare sites. CI, confidence interval; Prop., Proportion.}
\end{table}

\begin{table}[hbt]
\caption{Statistical comparison of FedMAP against baseline methods on eICU dataset (inference-only scenario)}\label{tab:comparison_eICU_inf}%
\begin{tabular}{@{}lrrrrrrr@{}}
\toprule
\textbf{Method} & \textbf{N Sites} & \textbf{FedMAP} & \textbf{Baseline} & \textbf{Median Diff.} & \textbf{95\% CI} & \textbf{p-value} & \textbf{Prop. Improved}\\
\midrule
FedAvg & \num{47} & \num{0.773} & \num{0.803} & \num{-0.008} & [\num{-0.041}, \num{0.000}] & \num{1.000} & \num{0.36}\\
pFedGP & \num{47} & \num{0.773} & \num{0.758} & +\num{0.022} & [\num{-0.000}, \num{0.045}] & \num{0.243} & \num{0.62}\\
pFedME & \num{47} & \num{0.773} & \num{0.778} & +\num{0.017} & [\num{0.000}, \num{0.038}] & \num{0.165} & \num{0.60}\\
pFedBayes & \num{47} & \num{0.773} & \num{0.722} & +\num{0.049} & [\num{0.015}, \num{0.073}] & \num{0.002} & \num{0.68}\\
pFedFDA & \num{47} & \num{0.773} & \num{0.750} & +\num{0.034} & [\num{0.000}, \num{0.085}] & \num{0.105} & \num{0.62}\\
\bottomrule
\end{tabular}
\footnotetext{Median performance metrics across healthcare sites. CI, confidence interval; Prop., Proportion.}
\end{table}

\clearpage
\subsection{Inference-only performance}
\begin{table}[h]
\caption{Inference performance on smallest sites.}\label{tab:group_c_results}%
\begin{tabular}{@{}lllll@{}}
\toprule
Dataset & Method & C-index & Brier score \\
\midrule
\multirow{6}{*}{\rotatebox{90}{CPRD}} & FedAvg & 0.611 ± 0.153 & 0.095 ± 0.031 &  \\
 & pFedGP & 0.480 ± 0.226 & 0.099 ± 0.033 &  \\
 & pFedME & 0.629 ± 0.155 & 0.094 ± 0.030 &  \\
 & pFedBayes & 0.626 ± 0.166 & 0.087 ± 0.029 &  \\
 & pFedFDA & 0.481 ± 0.101 & 0.098 ± 0.034 &  \\
 & FedMAP & 0.637 ± 0.162 & 0.086 ± 0.029 &  \\
\midrule
Dataset & Method & Sensitivity & Specificity & Balanced Accuracy \\
\midrule
\multirow{6}{*}{\rotatebox{90}{INTERVAL}} & FedAvg & 0.737 & 0.759 & 0.748 \\
 & pFedGP & 0.263 & 0.950 & 0.607 \\
 & pFedME & 0.895 & 0.340 & 0.618 \\
 & pFedBayes & 0.474 & 0.904 & 0.689 \\
 & pFedFDA & 0.684 & 0.738 & 0.711 \\
 & FedMAP & 0.737 & 0.748 & 0.743 \\
\midrule
\multirow{6}{*}{\rotatebox{90}{eICU}} & FedAvg & 0.269 ± 0.335 & 0.968 ± 0.036 & 0.619 ± 0.163 \\
 & pFedGP & 0.000 ± 0.000 & 0.979 ± 0.144 & 0.490 ± 0.072 \\
 & pFedME & 0.630 ± 0.375 & 0.696 ± 0.174 & 0.673 ± 0.179 \\
 & pFedBayes & 0.478 ± 0.384 & 0.765 ± 0.157 & 0.632 ± 0.189 \\
 & pFedFDA & 0.121 ± 0.217 & 0.953 ± 0.146 & 0.537 ± 0.132 \\
 & FedMAP & 0.529 ± 0.384 & 0.806 ± 0.159 & 0.668 ± 0.212 \\
\bottomrule
\end{tabular}
\footnotetext{Results reported as mean ± standard deviation per site. Bold values indicate best performance per dataset.}
\footnotetext[1]{INTERVAL has only one inference site.}
\end{table}

\clearpage
\subsection{Linear regression of relationship between individual site performance and FL benefit}
\begin{figure}[hbp]
    \centering
    \begin{subfigure}{0.45\textwidth}
        \centering
        \includegraphics[width=\linewidth]{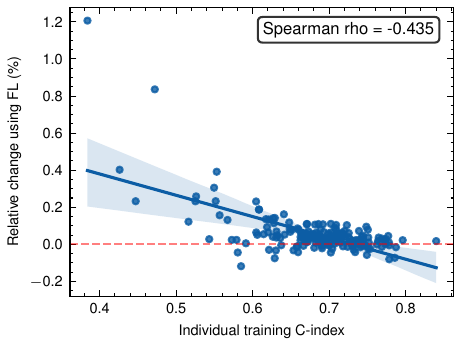}
        \caption{}
    \end{subfigure}%
    \begin{subfigure}{0.45\textwidth}
        \centering
        \includegraphics[width=\linewidth]{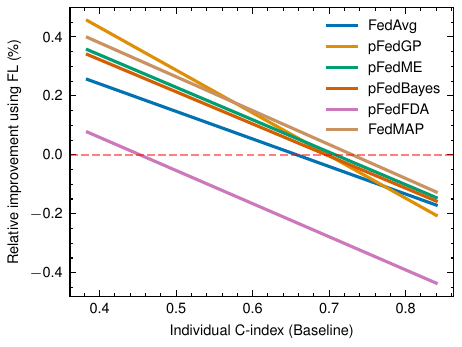}
        \caption{}
    \end{subfigure}\\
    \begin{subfigure}{0.45\textwidth}
        \centering
        \includegraphics[width=\linewidth]{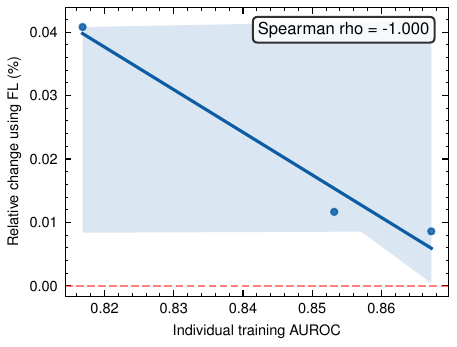}
        \caption{}
    \end{subfigure}%
    \begin{subfigure}{0.45\textwidth}
        \centering
        \includegraphics[width=\linewidth]{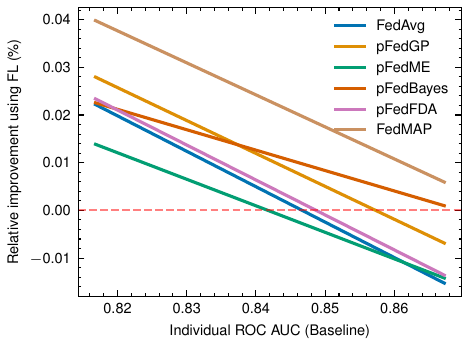}
        \caption{}
    \end{subfigure}\\
    \begin{subfigure}{0.45\textwidth}
        \centering
        \includegraphics[width=\linewidth]{img/roc_reg/eICU_auroc_reg_single_FedMAP.pdf}
        \caption{}
    \end{subfigure}%
    \begin{subfigure}{0.45\textwidth}
        \centering
        \includegraphics[width=\linewidth]{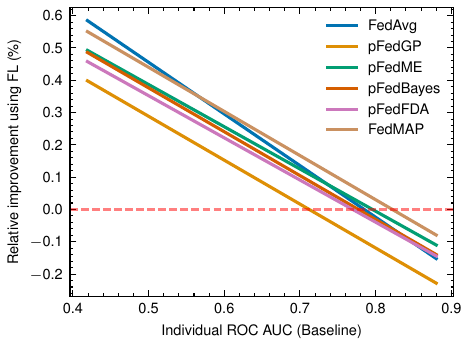}
        \caption{}
    \end{subfigure}\\

    \caption{\textbf{FL performance gains versus baseline individual site performance.}\\
    \textbf{a,c,e,} Performance from individual training on the x-axis and relative improvement using FedMAP FL on the y-axis. Solid line shows a linear regression fit with \SI{95}{\percent} confidence interval (shaded region).
    \textbf{b,d,f,} Similar to the left column but showing multiple FL methods and only showing the linear regression fit without the corresponding data or confidence interval.
    In all plots, the dashed line indicates zero improvement.
    \textbf{a,b,} CPRD.
    \textbf{c,d,} INTERVAL.
    \textbf{e,f,} eICU.
    }
    \label{fig:supp_regression}
\end{figure}

\end{appendices}
\end{document}